\theoremstyle{plain}
\newtheorem{theorem}{Theorem}
\newtheorem{proposition}{Proposition}
\theoremstyle{definition}
\newtheorem{definition}{Definition}
\newtheorem{assumption}{Assumption}
\theoremstyle{remark}
\title{Secrets of GFlowNets' Learning Behavior:\\ A Theoretical Study}
\author{%
  Tianshu Yu \\
  School of Data Science\\
  The Chinese University of Hong Kong, Shenzhen\\
  \texttt{yutianshu@cuhk.edu.cn} \\
}
\begin{document}

\maketitle

\begin{abstract}
Generative Flow Networks (GFlowNets) have emerged as a powerful paradigm for generating composite structures, demonstrating considerable promise across diverse applications. While substantial progress has been made in exploring their modeling validity and connections to other generative frameworks, the theoretical understanding of their learning behavior remains largely uncharted. In this work, we present a rigorous theoretical investigation of GFlowNets' learning behavior, focusing on four fundamental dimensions: convergence, sample complexity, implicit regularization, and robustness. By analyzing these aspects, we seek to elucidate the intricate mechanisms underlying GFlowNet's learning dynamics, shedding light on its strengths and limitations. Our findings contribute to a deeper understanding of the factors influencing GFlowNet performance and provide insights into principled guidelines for their effective design and deployment. This study not only bridges a critical gap in the theoretical landscape of GFlowNets but also lays the foundation for their evolution as a reliable and interpretable framework for generative modeling. Through this, we aspire to advance the theoretical frontiers of GFlowNets and catalyze their broader adoption in the AI community.
\end{abstract}

\section{Introduction}\label{sec:intro}
Generative Flow Networks (GFlowNets)~\cite{bengio2023gflownet,zhang2022generative} have emerged as a powerful framework for generating compositional structures in a probabilistic manner, offering unprecedented flexibility in solving diverse tasks such as molecule design~\cite{koziarski2024rgfn,zhu2023sample}, structured data synthesis~\cite{nguyen2023hierarchical,jain2022biological}, combinatorial optimization~\cite{zhang2023let}, and causal discovery~\cite{manta2023gflownets,nguyen2023causal}. 
Unlike other generative models~\cite{goodfellow2014generative,kingma2022autoencodingvariationalbayes,song2020score}, GFlowNet produces samples by incrementally adding entities onto current structures subject to the requirement being proportional to a target reward distribution.
This compositional characteristic has drawn significant attention in both theoretical and applied machine learning communities, with early foundational works laying the groundwork for understanding their modeling capabilities. Specifically, the seminal paper ``GFlowNet Foundations''~\cite{bengio2023gflownet} provided rigorous insights into the principles governing the flow-based generative process, marking an essential milestone in the development and formalization of GFlowNets.

Recently, some theoretical trials in GFlowNets have seeked to establish a more rigorous foundation for this promising generative framework. Analyses of stability and expressiveness were conducted in \cite{silva2024analyzing} to show how flow imbalances propagate and architectural choices affect capability. \cite{pan2023better} improved the sample efficiency through local credit assignment and incomplete trajectories. \cite{silvagflownets} characterized conditions under which accurate distribution learning can be performed. A theoretical study was proposed for extending GFlowNets to continuous spaces \cite{lahlou2023theory}. Besides, \cite{krichel2024generalization} formalized the generalization guarantee of GFlowNets under certain conditions. Altogether, these studies have strengthened GFlowNets' theoretical underpinnings to some extent, enabling application to increasingly complex problems.

Despite the valuable contributions of the foundational studies, there remains a conspicuous gap in the in-depth and comprehensive understanding of GFlowNet’s learning behavior. 
For instance, critical aspects such as the convergence properties of GFlowNet algorithms, their sample complexity, and the robustness have seen limited investigation. Additionally, the intriguing phenomenon of implicit regularization -- often observed empirically but poorly understood in GFlowNets -- calls for deeper theoretical examination. A comprehensive understanding of these factors is crucial not only for advancing the theoretical foundations of GFlowNets but also for informing their practical deployment in real-world applications.

This paper takes a systematic approach to bridge this gap by presenting the first unified theoretical study of GFlowNets’ learning behavior. Our analysis focuses on uncovering fundamental insights relating to convergence guarantees, sample efficiency, implicit regularization mechanisms, and robustness of GFlowNets. By doing so, we aim to establish new theoretical paradigms for understanding how GFlowNets learn and perform under varying conditions, as well as provide actionable implications for their design and optimization. These contributions are essential for not only enriching the theory of GFlowNets but also enabling their broader adoption and utility across disciplines. 

In summary, this study not only advances the theoretical groundwork for GFlowNets but also opens a new line of inquiry into the interplay between flow-based generative modeling and efficient learning. By shedding light on the intricate mechanisms underpinning GFlowNets’ learning dynamics, we aim to enrich the landscape of probabilistic modeling, inspiring both theoretical explorations and innovative applications of this compelling framework.

\section{Preliminary}
GFlowNets represent a novel class of probabilistic generative models introduced by \cite{bengio2023gflownet,zhang2022generative}. GFlowNets are designed to sample from a distribution proportional to a given reward function by learning stochastic policies that construct objects through sequential decision-making.

The mathematical foundation of GFlowNets builds upon concepts from reinforcement learning, particularly Markov decision processes (MDPs) and flow networks~\cite{bengio2023gflownet,deleu2023generative,ford2015flows}. In GFlowNets, the generative process is modeled as a sequential construction of objects through a directed acyclic graph (DAG), where each node represents a partial construction and edges represent actions that extend the partial construction. The flow through this network is trained to be proportional to the reward associated with the terminal states.
Formally, we define the concepts of a GFlowNet.

\begin{definition}[Flow Network]
A GFlowNet is a structure defined on a Directed Acyclic Graph (DAG) $G=(S,E)$:
\begin{itemize}
    \item $S$ is the state space with initial state $s_0\in S$ and terminal state $S_T\subset S$.
    \item $E\subseteq S\times S$ is the directed edge set.
    \item $A(s)$ is the (finite) action space available at state $s$.
    \item $R:S_T\rightarrow\mathbb{R}^+$ is the reward function defined on terminal states.
    \item $Z$ is the overall flow (sometimes call ``partition'') such that $Z=F(s_0)=\sum_{s\in S_T}F(s)$, which also can be viewed as the flux conservation.
    \item $\mathcal{T}=\{\tau | \tau=(s_0,...s_n),\; s_n\in S_T\}$ is the set of all possible trajetories in a GFlowNet.
\end{itemize}
\end{definition}

\begin{definition}[Flow Function]
\label{def:flow_func}
    A flow function $F:E\rightarrow\mathbb{R}^+$ assigns a non-negative value to each edge such that:
    \begin{itemize}
        \item The outflow from $s_0$ equals the total reward: $\sum_{s':(s_0,s')\in E}F(s_0\rightarrow s')=Z=\sum_{s_T\in S_T}R(s_T)$
        \item For each non-terminal, non-initial state $s\in S\backslash (\{s_0\}\cup S_T)$, flow is conserved: $\sum_{s':(s',s)\in E}F(s'\rightarrow s)=\sum_{s':(s,s')\in E}F(s\rightarrow s')$.
    \end{itemize}
\end{definition}

For a single node $s$, we further define $F(s)=\sum_{s':(s',s)\in E}F(s'\rightarrow s)=\sum_{s':(s,s')\in E}F(s\rightarrow s')$ to be the overall flow passing this node.

One can note that the flow consistency constraints in Definition~\ref{def:flow_func} form a linear system wherein the variable are the flow values carried on each edge. Thus, we can characterize these as the following proposition.

\begin{proposition}[Linear system characterization]
\label{prop:linear}
    The flow matching constraint defines an underdetermined linear system $\mathbf{Af}=\mathbf{b}$, where
    \begin{itemize}
        \item $\mathbf{f}\in\mathbb{R}^{|E|}$ is the vector of flows on all edges.
        \item $\mathbf{A}\in\{-1,0,1\}^{|S|\times |E|}$ is the incident matrix of the graph.
        \item $\mathbf{b}\in\mathbb{R}^{|S|}$ is a vector with $b_{s_0}=Z$, $b_{s_T}=-R(s_T)$ for all $s_T\in S_T$, and $b_s=0$ otherwise.
    \end{itemize}
\end{proposition}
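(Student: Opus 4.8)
The plan is to verify that the single matrix equation $\mathbf{Af}=\mathbf{b}$ reproduces, row by row, every flow-matching constraint of Definition~\ref{def:flow_func}, and then to read off the underdetermined property from a rank count. First I would fix the sign convention of the incidence matrix: for a node $s\in S$ and an edge $e=(u,v)\in E$, set $A_{s,e}=+1$ if $s=u$ (the edge leaves $s$), $A_{s,e}=-1$ if $s=v$ (the edge enters $s$), and $A_{s,e}=0$ otherwise. This places every entry in $\{-1,0,1\}$ as required, and since the DAG has no self-loops, each column of $\mathbf{A}$ carries exactly one $+1$ and one $-1$ -- a structural fact I will reuse for the rank argument.

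The core of the argument is a row-wise expansion. For any node $s$ one has $(\mathbf{Af})_s=\sum_{s':(s,s')\in E}F(s\to s')-\sum_{s':(s',s)\in E}F(s'\to s)$, i.e.\ outflow minus inflow. I would then split into the three node types. For $s=s_0$, which has no incoming edges, the row reduces to the total outflow, so $(\mathbf{Af})_{s_0}=b_{s_0}=Z$ is exactly the source constraint. For an internal $s\in S\setminus(\{s_0\}\cup S_T)$, the row reads $\text{outflow}-\text{inflow}=b_s=0$, which is flow conservation. For a terminal $s_T\in S_T$, which has no outgoing edges, the row becomes $-\,\text{inflow}=b_{s_T}=-R(s_T)$, i.e.\ $F(s_T)=R(s_T)$, the reward-matching condition. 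Thus the three families of scalar constraints are precisely the $|S|$ rows of $\mathbf{Af}=\mathbf{b}$, which establishes the characterization.

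For the underdetermined claim I would argue by rank. Summing all rows annihilates every column (each has a single $+1$ and a single $-1$), so $\mathbf{1}^\top\mathbf{A}=\mathbf{0}^\top$ and hence $\mathrm{rank}(\mathbf{A})\le|S|-1$; for a connected DAG this is an equality. Consistency of the system therefore requires $\mathbf{1}^\top\mathbf{b}=0$, which unwinds to $Z-\sum_{s_T\in S_T}R(s_T)=0$, exactly the compatibility condition already imposed in Definition~\ref{def:flow_func}. Given that a valid flow exists, the solution set is then an affine space of dimension $|E|-\mathrm{rank}(\mathbf{A})=|E|-|S|+1$, the circuit rank of the underlying undirected graph. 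Whenever the DAG contains reconverging paths (the nontrivial regime for GFlowNets) this quantity is strictly positive, so the flow is not uniquely determined and the system is underdetermined.

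The point demanding the most care is the terminal row. Definition~\ref{def:flow_func} lists only the source and internal-conservation constraints explicitly; the reward-matching $F(s_T)=R(s_T)$ enters only implicitly through $b_{s_T}=-R(s_T)$. I would make explicit that this terminal constraint does \emph{not} follow from conservation alone -- conservation plus the source condition only forces the aggregate $\sum_{s_T\in S_T}F(s_T)=Z$, not the individual equalities -- so the entry $b_{s_T}=-R(s_T)$ is genuinely encoding the terminal flow-matching that completes the constraint set. The remaining bookkeeping (the exact value of $\mathrm{rank}(\mathbf{A})$ and the connectedness hypothesis guaranteeing $\mathrm{rank}(\mathbf{A})=|S|-1$) is routine once the sign convention and the column structure are fixed.
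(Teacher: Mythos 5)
Your proposal is correct, and it is considerably more thorough than what the paper actually does: the paper offers no proof of this proposition at all, presenting it as an immediate observation and justifying underdetermination with the single remark that ``$|E|>|S|$ in most cases.'' Your row-by-row verification (with an explicit sign convention, which the paper never fixes) and your handling of the terminal rows supply exactly the bookkeeping the paper leaves implicit; in particular, your observation that $b_{s_T}=-R(s_T)$ encodes a reward-matching constraint $F(s_T)=R(s_T)$ that is \emph{not} listed in Definition~\ref{def:flow_func} and does not follow from the source and conservation constraints alone is a genuine gap in the paper's exposition that your argument closes. Your rank argument is also strictly sharper than the paper's counting heuristic: since $\mathbf{1}^\top\mathbf{A}=\mathbf{0}^\top$, you get $\mathrm{rank}(\mathbf{A})\le|S|-1$ and a solution space of dimension $|E|-|S|+1$ (for a connected graph), so the system is underdetermined already when $|E|\ge|S|$, not merely when $|E|>|S|$, and you additionally extract the consistency condition $\mathbf{1}^\top\mathbf{b}=0$, i.e.\ $Z=\sum_{s_T\in S_T}R(s_T)$, showing it is forced by the linear algebra rather than an independent stipulation. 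The paper's approach buys brevity; yours buys rigor, the precise dimension count (the circuit rank), and an explicit identification of which constraints in the system do the real work. The only caveats worth stating in a final write-up are the mild structural hypotheses you implicitly use: $s_0$ has no incoming edges, terminal states have no outgoing edges, and the underlying graph is connected (needed for $\mathrm{rank}(\mathbf{A})=|S|-1$ exactly); all are standard for GFlowNet DAGs.
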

As $|E|>|S|$ in most cases, the linear system is underdetermined. Throughout this paper, our analysis is built upon three popular objectives for training GFlowNets presented in the following theorem, which can further be interpreted as regularized versions of linear system in Proposition~\ref{prop:linear}.

\begin{theorem}
    The following learning objectives for GFlowNets are equivalent to solving different regularized versions of the underdetermined linear system:
    \begin{itemize}
        \item Flow Matching (FM):
        \begin{equation}\label{eq:obj_fm}
        \mathcal{L}_{\text{FM}}(\theta) = \mathbb{E}_{s \sim \rho} \left[ \left( \sum_{s':(s',s) \in E} F_\theta(s' \rightarrow s) - \sum_{s':(s,s') \in E} F_\theta(s \rightarrow s') \right)^2 \right]
        \end{equation}
        \item Detailed Balance (DB):
        \begin{equation}\label{eq:obj_db}
        \mathcal{L}_{\text{DB}}(\theta) = \mathbb{E}_{\tau \sim P_B} \left[ \left( \frac{F_\theta(s \rightarrow s')}{F_\theta(s')P_B(s|s')} - 1 \right)^2 \right]
        \end{equation}
        \item Trajectory Balance (TB):
        \begin{equation}\label{eq:obj_tb}
        \mathcal{L}_{\text{TB}}(\theta) = \mathbb{E}_{\tau \sim P_F} \left[ \left( \frac{P_F(\tau)Z}{R(s_T)} - 1 \right)^2 \right]
        \end{equation}
    \end{itemize}
\end{theorem}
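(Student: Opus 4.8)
The plan is to prove the three equivalences one objective at a time, in increasing order of difficulty, showing in each case that the global minimizers of the loss coincide with solutions of $\mathbf{Af}=\mathbf{b}$ from Proposition~\ref{prop:linear} together with a structural constraint that plays the role of a regularizer. First I would fix notation: identify the interior rows of the incidence matrix $\mathbf{A}$ with the flow-conservation constraints and the two boundary rows (at $s_0$ and at each $s_T\in S_T$) with the source/sink conditions $b_{s_0}=Z$ and $b_{s_T}=-R(s_T)$. I would also record that the feasible set $\{\mathbf{f}\ge 0:\mathbf{Af}=\mathbf{b}\}$ is a (generically nonempty) polytope, so that ``solving'' the underdetermined system really means selecting one distinguished point of this polytope.

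\textbf{Flow Matching.} This is the cleanest case, so I would treat it first. Expanding the square in \eqref{eq:obj_fm}, the summand at state $s$ is exactly the squared residual of the conservation row of $\mathbf{A}$ evaluated at $\mathbf{f}=(F_\theta(e))_{e\in E}$. Hence $\mathcal{L}_{\text{FM}}(\theta)=\sum_{s}\rho(s)\,\big((\mathbf{A}\mathbf{f})_s\big)^2$ over the interior states, i.e.\ a $\rho$-weighted least-squares residual $\|\mathbf{A}\mathbf{f}-\mathbf{b}\|_{W_\rho}^2$ with diagonal weight $W_\rho=\mathrm{diag}(\rho)$. Minimizing over the image of the parameterization $\theta\mapsto F_\theta$ is therefore a weighted least-squares problem whose weighting $\rho$ and whose restriction to the parameterized manifold together single out a particular element of the solution polytope---precisely a regularized version of the raw system.

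\textbf{Detailed Balance and Trajectory Balance.} Here the residuals are multiplicative (ratios and products), so the key step is a logarithmic change of variables. Setting $u_e=\log F_\theta(e)$, the DB zero-loss condition $F_\theta(s\to s')=F_\theta(s')P_B(s\mid s')$ becomes the affine equation $u_{s\to s'}-\log F_\theta(s')=\log P_B(s\mid s')$, while the TB zero-loss condition $P_F(\tau)Z=R(s_T)$ telescopes into $\log Z+\sum_t\log P_F(s_{t+1}\mid s_t)=\log R(s_T)$, again affine in the log-flows. I would then show that these affine systems are row-equivalent to the log-transform of $\mathbf{Af}=\mathbf{b}$, so that the zero-loss sets coincide with the solution set of the reparameterized linear system; the distinct weighting measures $P_B$ and $P_F$, together with the fact that DB aggregates edge-local constraints whereas TB aggregates trajectory-global ones, furnish the two distinct regularizers.

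The hard part will be making ``regularized version'' rigorous rather than merely matching zero-loss sets. For FM the correspondence is an exact weighted-least-squares identity, but for DB and TB the squared ratio residuals are not globally equal to the squared residuals of the linearized constraints---they agree only through the logarithmic reparameterization and, in the flow coordinates, behave like a reweighted (state- or trajectory-dependent) penalty. I therefore expect the main effort to lie in (i) verifying that the log change of variables is a bijection on the positive orthant so that minimizers transfer faithfully, and (ii) identifying the induced weighting---$F_\theta(s')^{-2}$-type factors for DB and $R(s_T)^{-2}$-type factors for TB---as the explicit regularizer that selects the particular solution of the underdetermined system, thereby completing the claimed equivalence.
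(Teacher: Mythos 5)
A preliminary point: the paper never actually proves this theorem. It is stated in the preliminaries as an interpretive claim, and the appendices only prove the later results; the closest the paper comes to substantiating it is Theorem~\ref{thm:FM_implicit_reg} (FM with exponential parameterization implicitly maximizes entropy subject to the flow constraints) and Proposition~\ref{prop:impli_reg_DB} (DB approximately minimizes a KL divergence between forward and backward joint distributions), which identify the specific regularizers rather than deriving the stated equivalence. So your proposal is in effect an attempt to supply a missing proof, and must be judged on its own merits.

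Your FM step is sound: the summand at an interior state is exactly the squared residual of the corresponding conservation row, so $\mathcal{L}_{\text{FM}}$ is a $\rho$-weighted least-squares functional for those rows of $\mathbf{Af}=\mathbf{b}$. One caveat: as written, Eq.~\eqref{eq:obj_fm} contains only the homogeneous rows; the inhomogeneous rows ($b_{s_0}=Z$, $b_{s_T}=-R(s_T)$) never enter the loss, so its zero-loss set is strictly larger than the solution set of the full system unless you add reward-matching residuals at the boundary. The genuine gap is in your DB/TB step. By the paper's own definition, $F_\theta(s')=\sum_{s'':(s'',s')\in E}F_\theta(s''\to s')$, so in your coordinates $u_e=\log F_\theta(e)$ the quantity $\log F_\theta(s')$ is a log-sum-exp of the $u_e$, not an affine function of them; the same obstruction appears in the TB telescope through $\log P_F(s_{t+1}\mid s_t)=u_{s_t\to s_{t+1}}-\log F_\theta(s_t)$. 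Consequently the zero-loss sets of DB and TB are \emph{not} affine in $u$, and the phrase ``row-equivalent to the log-transform of $\mathbf{Af}=\mathbf{b}$'' has no referent: the coordinatewise logarithm of an affine set (here, the solution polytope) is not an affine set, so there is no linear system to be row-equivalent to. Treating $F_\theta(s')$ as an independent variable makes the DB equations affine in the joint logs, but only by deferring the problem to the coupling constraint $F_\theta(s')=\sum_{s''}F_\theta(s''\to s')$, which is again log-sum-exp in $u$. The workable route is the one the paper implicitly relies on: prove the set-level equivalences (zero DB loss $\Leftrightarrow$ zero FM loss $\Leftrightarrow$ zero TB loss $\Leftrightarrow$ $\mathbf{Af}=\mathbf{b}$ with $\mathbf{f}>0$, the standard GFlowNet equivalences of Bengio et al.), and then make ``regularized'' precise by identifying which solution each objective's parameterization and sampling distribution selects among the feasible polytope --- maximum entropy for FM as in Theorem~\ref{thm:FM_implicit_reg}, and KL between the forward and backward joint transition distributions for DB as in Proposition~\ref{prop:impli_reg_DB}.
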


Note in both Eqs.~\ref{eq:obj_db} and \ref{eq:obj_tb}, the original flow has been reparameterized by introducing the forward flow assignment distribution $P_F(\cdot)$ and backward flow assignment distribution $P_B(\cdot)$~\cite{bengio2023gflownet}. These objectives represent different approaches to enforce flow consistency, with FM directly enforcing flow conservation at each state, DB enforcing local consistency in transition probabilities, and TB enforcing global consistency at the trajectory level. 

In implementation, $F_\theta$, $P_F$, $P_B$, and $Z$ can (partially) be parameterized using neural networks. In parituclar, the parameterization of $P_F$ and $P_B$ can be realized readily using $\mathrm{softmax}$ activation, wherein the flow value is parameterized using exponential functions. For these facts, one can refer to the implementation of GFlowNets at the GitHub repository\footnote{\url{https://github.com/alexhernandezgarcia/gflownet}}, which further lead to some unique characteristics in our analysis.

These preliminaries establish a fundamental understanding of GFlowNets.

\section{Main Theory}
Our theoretical analysis establishes fundamental properties of GFlowNets, providing rigorous characterizations of their convergence, generalization, and robustness. First, we demonstrate that convergence rates vary significantly across objectives, with FM achieving $\mathcal{O}(1/\sqrt{T})$ convergence while DB requires $\mathcal{O}(1/T^{1/3})$ due to higher variance in ratio-based gradients. Our generalization bounds reveal that sample complexity scales as $\mathcal{O}(|S|L\log(|S|/\delta)/\epsilon^2)$, highlighting the critical roles of state space size and trajectory length. Furthermore, we prove that GFlowNets with different training objectives induce distinct implicit regularization effects: Flow Matching promotes maximum entropy solutions, Detailed Balance enforces KL-divergence minimization between forward and backward processes, and Trajectory Balance favors path-length efficiency. Additionally, we quantify how noise in reward signals affects performance, demonstrating that robustness scales with $R_{min}^{-4}$, underlining the importance of minimum reward thresholds. These results not only advance our theoretical understanding of GFlowNets but also provide practical guidance for architecture selection, objective function choice, and hyperparameter tuning in applications.

\subsection{Convergence Analysis}\label{sec:converge}
Understanding the convergence properties of GFlowNets is critical for establishing their theoretical soundness and practical reliability. Specifically, convergence analysis focuses on whether and under what conditions the training dynamics of a GFlowNet ensure that it accurately learns the target distribution over structured objects. Unlike traditional generative approaches such as probabilistic graphical models or energy-based models, GFlowNets optimize flow consistency -- a unique objective that requires both local and global distributional alignment. This distinctive training paradigm raises intricate questions about convergence guarantees, including the role of stochasticity, the interplay between reward design and flow consistency, and how different objectives adapt to varying structural complexities of the generative task. 
Though some works empirically discussed the convergence such as \cite{madan2023learning,silva2024divergence,shen2023towards}, we still lack a theoretical and fundamental understanding.
In this section, we investigate these theoretical nuances, providing formal results on the conditions for convergence, characterizing the convergence rates under various assumptions, and identifying potential pitfalls. 

To this end, we provide formal convergence rate on FM and DB objectives.

\begin{theorem}[Convergence Rate for FM]
\label{thm:conv_FM}
    Under the exponential parameterization $F_\theta(s\rightarrow s')=\exp (W_\theta(s,s'))$ with $W_\theta$ being $l$-Lipschitz continuous and bounded gradients $\|\nabla_\theta \mathcal{L}_{FM}(\theta)\|\leq G$, descent with learning rate $\eta_t=\frac{\eta_0}{\sqrt{t}}$ achieves:
    \begin{equation}
        \mathbb{E}\left[\min_{t\in\{1,...,T\}} \|\nabla_\theta \mathcal{L}_{FM}(\theta)\|^2\right]\leq\frac{C}{\sqrt{T}}
    \end{equation}
    where $C=\frac{2(\mathcal{L}_{FM}(\theta_1)-\mathcal{L}_{FM}^*)}{\eta_0}+\frac{\eta_0 G^2}{2}$ and $\mathcal{L}_{FM}^*$ is the global minimum.
\end{theorem}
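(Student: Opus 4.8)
The plan is to treat the training dynamics as (stochastic) gradient descent on the smooth nonconvex objective $\mathcal{L}_{FM}$ and invoke the classical descent-lemma telescoping argument, with the bulk of the work going into verifying that the exponential parameterization actually yields a finite smoothness constant. First I would establish that $\mathcal{L}_{FM}$ is $\beta$-smooth, i.e. $\|\nabla_\theta\mathcal{L}_{FM}(\theta) - \nabla_\theta\mathcal{L}_{FM}(\theta')\| \le \beta\|\theta-\theta'\|$ for some $\beta<\infty$. Since $F_\theta(s\rightarrow s') = \exp(W_\theta(s,s'))$ with $W_\theta$ being $l$-Lipschitz, each per-state residual $\sum_{s':(s',s)\in E}F_\theta - \sum_{s':(s,s')\in E}F_\theta$ is a difference of exponentials whose gradient is controlled by $l$ and by the magnitude of the flows; the squared structure of the loss and the expectation over $\rho$ then give a Hessian bound of the form $\beta = c\,l^2\,(\text{flow magnitude terms})$. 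The bounded-gradient hypothesis $\|\nabla_\theta\mathcal{L}_{FM}(\theta)\|\le G$, together with $l$-Lipschitzness, is what keeps the flows on the relevant sublevel set bounded so that $\beta$ is finite. This is the step I expect to be the main obstacle, because $\exp$ is not globally Lipschitz-smooth and one must confine the argument to a region where $W_\theta$ stays bounded.

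Second, with $\beta$-smoothness in hand, I would apply the descent lemma to the update $\theta_{t+1} = \theta_t - \eta_t g_t$, where $g_t$ is the exact gradient (or an unbiased stochastic estimate, which is where the outer expectation in the statement originates, as $\mathcal{L}_{FM}$ is itself an expectation over $\rho$). Bounding the quadratic term via $\|\nabla_\theta\mathcal{L}_{FM}(\theta_t)\|^2\le G^2$ yields
\[
\mathbb{E}[\mathcal{L}_{FM}(\theta_{t+1})] \le \mathbb{E}[\mathcal{L}_{FM}(\theta_t)] - \eta_t\,\mathbb{E}\|\nabla_\theta\mathcal{L}_{FM}(\theta_t)\|^2 + \frac{\beta\eta_t^2}{2}G^2.
\]
Rearranging and telescoping from $t=1$ to $T$ gives
\[
\sum_{t=1}^T \eta_t\,\mathbb{E}\|\nabla_\theta\mathcal{L}_{FM}(\theta_t)\|^2 \le \mathcal{L}_{FM}(\theta_1) - \mathcal{L}_{FM}^* + \frac{\beta G^2}{2}\sum_{t=1}^T\eta_t^2.
\]

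Third, I would convert this weighted sum into the minimum-gradient bound via $\min_t\mathbb{E}\|\nabla_\theta\mathcal{L}_{FM}(\theta_t)\|^2\cdot\sum_t\eta_t \le \sum_t\eta_t\,\mathbb{E}\|\nabla_\theta\mathcal{L}_{FM}(\theta_t)\|^2$, and then substitute the schedule $\eta_t=\eta_0/\sqrt{t}$. Using the standard estimates $\sum_{t=1}^T t^{-1/2} = \Theta(\sqrt{T})$ and $\sum_{t=1}^T t^{-1} = \Theta(\log T)$, the denominator grows like $\eta_0\sqrt{T}$ while the numerator stays $\mathcal{O}(1)$ up to the mild logarithmic $\sum\eta_t^2$ term, producing the advertised $\mathcal{O}(1/\sqrt{T})$ rate. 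Matching the precise constant $C=\frac{2(\mathcal{L}_{FM}(\theta_1)-\mathcal{L}_{FM}^*)}{\eta_0}+\frac{\eta_0 G^2}{2}$ is the one piece of bookkeeping requiring care: the factor of $2$ on the first term comes from using the descent lemma in the form $\frac{\eta_t}{2}\|\nabla\|^2 \le \mathcal{L}(\theta_t)-\mathcal{L}(\theta_{t+1})+\frac{\beta\eta_t^2 G^2}{2}$ (valid once $\eta_t\le 1/\beta$, which holds for all large $t$), and the term $\frac{\eta_0 G^2}{2}$ arises from bounding $\sum_t\eta_t^2$ against the step-size budget and absorbing the $\beta$ and logarithmic factors into the choice of $\eta_0$. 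Everything after the smoothness verification is routine nonconvex SGD analysis.
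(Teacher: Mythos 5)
Your proposal follows essentially the same route as the paper's proof: the descent lemma applied to the SGD update, telescoping from $t=1$ to $T$, converting the weighted sum into a bound on the minimum gradient norm via the weighted-average inequality, and the standard estimates $\sum_{t=1}^T t^{-1/2}=\Theta(\sqrt{T})$ and $\sum_{t=1}^T t^{-1}=O(\log T)$ for the schedule $\eta_t=\eta_0/\sqrt{t}$. The only substantive difference is that you propose to actually verify the smoothness of $\mathcal{L}_{FM}$ under the exponential parameterization (correctly flagging that $\exp$ is not globally smooth), whereas the paper simply reuses the Lipschitz constant $l$ of $W_\theta$ as the smoothness constant of the loss; you also both inherit the same loose end, namely that the telescoped bound carries a residual $\log T$ factor which the clean constant $C$ only absorbs asymptotically.
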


\begin{theorem}[Convergence Rate for DB]
    \label{thm:conv_DB}
    Under similar conditions as Theorem \ref{thm:conv_FM}, but with the DB objective, the convergence rate is bounded by $\mathcal{O}(1/T^{\frac{1}{3}})$ when using learning rate $\eta_t=\eta_0/t^{2/3}$.
\end{theorem}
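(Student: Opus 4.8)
The plan is to reuse the stochastic nonconvex descent template that underlies Theorem~\ref{thm:conv_FM} and to locate the entire discrepancy in the second-moment (and bias) behaviour of the DB stochastic gradient. Assuming $L$-smoothness of $\mathcal{L}_{\mathrm{DB}}$ and letting $g_t$ denote the stochastic gradient computed along $\tau\sim P_B$, the one-step inequality reads $\mathbb{E}[\mathcal{L}_{\mathrm{DB}}(\theta_{t+1})]\le \mathbb{E}[\mathcal{L}_{\mathrm{DB}}(\theta_t)]-\eta_t\langle\nabla\mathcal{L}_{\mathrm{DB}}(\theta_t),\mathbb{E}[g_t]\rangle+\tfrac{L}{2}\eta_t^2\,\mathbb{E}\|g_t\|^2$. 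Telescoping over $t=1,\dots,T$ and dividing by $\sum_t\eta_t$ yields the familiar aggregate form
\begin{equation}
\mathbb{E}\Big[\min_{t\le T}\|\nabla\mathcal{L}_{\mathrm{DB}}(\theta_t)\|^2\Big]\;\le\;\frac{2\big(\mathcal{L}_{\mathrm{DB}}(\theta_1)-\mathcal{L}_{\mathrm{DB}}^*\big)}{\sum_t\eta_t}\;+\;\frac{\sum_t\eta_t\,\|b_t\|^2}{\sum_t\eta_t}\;+\;\frac{L\,\sum_t\eta_t^2\,\mathbb{E}\|g_t\|^2}{\sum_t\eta_t},
\end{equation}
where $b_t=\mathbb{E}[g_t\mid\theta_t]-\nabla\mathcal{L}_{\mathrm{DB}}(\theta_t)$ is the gradient bias. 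The whole point is that, unlike the FM case where $b_t=0$ and $\mathbb{E}\|g_t\|^2\le G^2$ is a clean constant, both the variance and the bias here are driven by the ratio structure of \eqref{eq:obj_db}.

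The central and hardest step is to bound these two quantities. Differentiating the ratio $F_\theta(s\to s')/\big(F_\theta(s')P_B(s\mid s')\big)$ under the exponential parameterization $F_\theta=\exp(W_\theta)$ produces factors of $F_\theta(s')^{-1}=\exp(-W_\theta(s'))$, and the squared loss contributes $F_\theta(s')^{-2}$; these inverse-flow weights have no analogue in the FM gradient. I would first show $\mathbb{E}\|g_t\|^2\le\tilde G^2$ with $\tilde G^2$ scaling like $\mathbb{E}[F_\theta(s')^{-2}]$, hence far larger than $G^2$, and then bound $\|b_t\|$, which arises from the curvature of the denominator (and, in on-policy training, from the dependence of the sampling law on $\theta$). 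I expect the main obstacle to be precisely the uniform control of $\mathbb{E}[F_\theta(s')^{-2}]$ over the trajectory of iterates: this requires a quantitative lower bound on the learned flows, ultimately traceable to the minimum reward $R_{\min}$ together with the Lipschitz and boundedness hypotheses on $W_\theta$, and it is meaningful only while the iterates remain in the region where flows stay bounded below.

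With an inflated, only-conditionally-valid moment bound and a nonvanishing bias, the schedule $\eta_t\propto t^{-1/2}$ is no longer admissible: it gives $\sum_t\eta_t^2=\Theta(\log T)\to\infty$, so the accumulated error $\tilde G^2\sum_t\eta_t^2$ cannot be guaranteed to keep the iterates inside the region where the moment bound holds. I would therefore restrict to $\eta_t=\eta_0 t^{-p}$ with $p>\tfrac12$, so that $\sum_t\eta_t^2$ converges and the variance contribution is tamed, and then optimize $p$ by balancing the optimization error $\big(\mathcal{L}_{\mathrm{DB}}(\theta_1)-\mathcal{L}_{\mathrm{DB}}^*\big)/\sum_t\eta_t=\Theta(T^{-(1-p)})$ against the accumulated ratio-induced error (variance plus bias) established in the previous step. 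Over the admissible range $p\in(\tfrac12,1)$ the optimum is attained at $p=\tfrac23$, at which $\sum_t\eta_t=\Theta(T^{1/3})$ while $\sum_t\eta_t^2=\Theta(1)$, delivering $\mathbb{E}[\min_{t\le T}\|\nabla\mathcal{L}_{\mathrm{DB}}\|^2]=\mathcal{O}(T^{-1/3})$. I would close by checking self-consistency, i.e. that at this schedule the accumulated error indeed stays bounded so that the conditional moment bound invoked in the descent step remains valid throughout training.
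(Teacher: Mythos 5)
Your skeleton matches the paper's proof: the same smooth nonconvex SGD descent lemma, the same identification of the ratio structure of \eqref{eq:obj_db} as the source of inflated gradient second moments, and the same series estimates at $\eta_t=\eta_0 t^{-2/3}$ (namely $\sum_t\eta_t=\Theta(T^{1/3})$ and $\sum_t\eta_t^2=O(1)$), yielding $\mathcal{O}(T^{-1/3})$. The main difference in the variance step is where the blow-up is located: the paper factors the second moment through $K(\theta)=\sup_{(s,s')}P_B(s|s')^{-2}$ (with $P_B$ fixed) together with a bounded-loss assumption $\mathbb{E}[\mathcal{L}_{DB}(\theta_t)]\le B$, obtaining $\sigma_t^2\le C\,K(\theta_t)\,\mathcal{L}_{DB}(\theta_t)$, whereas you place it in $\mathbb{E}[F_\theta(s')^{-2}]$ and a lower bound on learned flows. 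Both factors do appear in the differentiated ratio, but the paper's choice is cleaner precisely because $P_B$ is not learned, so no uniform control over the trajectory of iterates is needed for that factor.

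Two steps of your plan would fail as written, however. First, the bias term: the paper, consistent with the ``similar conditions'' of Theorem \ref{thm:conv_FM}, assumes the stochastic gradient is unbiased, $\mathbb{E}[g_t\mid\theta_t]=\nabla_\theta\mathcal{L}_{DB}(\theta_t)$, because transitions are sampled from a fixed distribution. If you keep a genuinely nonvanishing bias $b_t$, your middle term $\sum_t\eta_t\|b_t\|^2/\sum_t\eta_t$ is $\Theta(1)$, and no vanishing rate---let alone $T^{-1/3}$---follows; you must either assume $b_t=0$ or prove $\|b_t\|\to 0$ at a quantified rate, which your sketch does not do. Second, your claim that $p=\tfrac23$ is ``the optimum over $(\tfrac12,1)$'' is unsupported and, under your own accounting, false: once $p>\tfrac12$ makes $\sum_t\eta_t^2$ convergent, both the initial-gap term and the variance term scale as $T^{-(1-p)}$, so the bound improves monotonically as $p\downarrow\tfrac12$ (approaching $\log T/\sqrt{T}$ at $p=\tfrac12$). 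The same is true of the paper's own bound; nothing in either analysis singles out $2/3$ as optimal. This does not sink the theorem, since the statement prescribes $\eta_t=\eta_0/t^{2/3}$ and only verification at that schedule is required---which your computation does deliver once unbiasedness is granted---but the schedule should be presented as given rather than derived from a balancing argument.
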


Proofs of both theorems are in Appendix \ref{app:convergence}. Theorem \ref{thm:conv_FM} and \ref{thm:conv_DB} collectively suggest a more conservative learning rate schedule of DB compared to the standard schedule utilized for the FM. The slower convergence rate of $\mathcal{O}(1/T^{1/3})$ compared to $\mathcal{O}(1/\sqrt{T})$ reflects the inherent difficulty of optimizing ratio-based objectives with potentially high variance. We clarify that this analysis assumes a specific parameterization and that $P_B(s|s')$ is fixed. Therefore, different parameterizations or jointly learning $P_B$ could potentially improve the convergence properties.

Besides, Theorem \ref{thm:conv_DB} identifies that the key challenge in optimizing the DB objective stems from potential divisions by small backward probabilities, which can lead to high-variance gradient estimates requiring more cautious optimization strategies. This further poses a possible direction to improve the training of DB objective -- developing low-variance gradient estimator. While the design principles are out of the scope of this study, readers are referred to somee related works~\cite{mohamed2020monte,shi2022gradient,chen2019fast} for more details.

\subsection{Sample Complexity}\label{sec:sample_complexity}
The sample efficiency of GFlowNets plays a pivotal role in determining their practical utility, especially in resource-constrained scenarios where computational or data budgets are limited. In this section, we first seek to quantify the number of training trajectories or environment interactions required for a GFlowNet to accurately approximate the target distribution within a specified tolerance. 
Building upon the basic understanding of the sample complexity, we further investigate the impact of sample order, which potentially may help design more efficient active learning mechanism for GFlowNets.
We then study how long trajectories and the size of the state space matter, as well as the dynamics of propagating errors along with trajectories.
By identifying the underlying trade-offs and bottlenecks, this analysis builds a foundation for devising more sample-efficient training algorithms and offers a rigorous perspective on scaling GFlowNets to large and complex generative tasks. All proofs are presented in Appendix \ref{app:sample_complexity}.

\begin{theorem}[Sampling Impact on Convergence]
    \label{thm:sample_conv}
    Let $P_{sample}(\tau)$ be the probability of sampling trajectory $\tau$ and $P_{target}(\tau)\propto R(s_T)$ be the target distribution. Define the discrepency measure:
    \begin{equation}
        D(P_{sample},P_{target})=\max_{\tau\in\mathcal{T}}\frac{P_{target}(\tau)}{P_{sample}(\tau)}
    \end{equation}
    Then, for the TB objective, the number of samples $N$ required to achieve $\epsilon$-accuracy with probability $1-\delta$ is:
    \begin{equation}
        N=\mathcal{O}\left( \frac{D(P_{sample},P_{target})\cdot\log(1/\delta)}{\epsilon^2}\right)
    \end{equation}
\end{theorem}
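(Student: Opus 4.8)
The plan is to treat this as a Monte Carlo estimation problem in the off-policy regime, where the central mechanism is importance reweighting controlled by the discrepancy $D$. The TB objective in Eq.~\eqref{eq:obj_tb} is written as an on-policy expectation under $P_F$, but here trajectories are drawn from an arbitrary behavior distribution $P_{sample}$. To learn the target we must estimate the target-measure expectation of the per-trajectory TB residual $g(\tau)=\left(\frac{P_F(\tau)Z}{R(s_T)}-1\right)^2$ (or, equivalently, its gradient) using samples from $P_{sample}$, which I would do via the importance weight $w(\tau)=P_{target}(\tau)/P_{sample}(\tau)$. By the definition of the discrepancy measure, $w(\tau)\le D$ uniformly over $\mathcal{T}$, and this uniform bound is the single structural fact that will drive the whole argument.

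First I would form the empirical estimator $\hat{L}_N=\frac{1}{N}\sum_{i=1}^N w(\tau_i)g(\tau_i)$ from $N$ i.i.d. draws $\tau_i\sim P_{sample}$, which is unbiased for $L=\mathbb{E}_{P_{target}}[g]$. The key step is a variance computation: using the identity $\mathbb{E}_{P_{sample}}[w^2 g^2]=\mathbb{E}_{P_{target}}[w g^2]\le D\,\mathbb{E}_{P_{target}}[g^2]$, I obtain $\mathrm{Var}_{P_{sample}}[w g]\le D\,\mathbb{E}_{P_{target}}[g^2]$. The essential point is that the variance scales \emph{linearly} in $D$, not quadratically — this is precisely what produces the single factor of $D$ in the sample complexity, whereas a crude Hoeffding argument using only the range bound $|wg|=\mathcal{O}(D)$ would pay $D^2$.

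Next I would apply Bernstein's inequality to the centered sum $\hat{L}_N-L$, whose per-sample variance is bounded by $\sigma^2=\mathcal{O}(D)$ and whose range is bounded by $M=\mathcal{O}(D)$. In the variance-dominated regime relevant for small $\epsilon$ (where $M\epsilon\ll\sigma^2$), Bernstein yields $\Pr[|\hat{L}_N-L|>\epsilon]\le 2\exp\!\left(-\frac{N\epsilon^2}{c\,\sigma^2}\right)$ for an absolute constant $c$. Setting the right-hand side to $\delta$ and solving for $N$ gives $N=\mathcal{O}\!\left(\frac{\sigma^2\log(1/\delta)}{\epsilon^2}\right)=\mathcal{O}\!\left(\frac{D\log(1/\delta)}{\epsilon^2}\right)$, which is exactly the claimed bound. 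The boundedness of $g$ needed here follows from the exponential parameterization and gradient bounds carried over from Theorem~\ref{thm:conv_FM}.

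The main obstacle I anticipate is bridging the gap between concentration of the empirical TB loss and the notion of $\epsilon$-\emph{accuracy of the learned distribution}. One must argue that controlling the estimation error of the TB objective to within $\epsilon$ translates into an $\epsilon$-level guarantee on the distributional fit (e.g. in total variation or on the terminal marginal), which requires either a PAC-style assumption that the empirical minimizer tracks the population minimizer, or a curvature/identifiability condition ensuring the TB loss lower-bounds the distributional error up to constants. A secondary subtlety is handling the estimation uniformly over the parameter class $\theta$ if one wants the bound to apply to the trained model rather than a fixed hypothesis; since the stated bound carries no complexity term, the cleanest route is to interpret the result as controlling the Monte Carlo loss/gradient estimate at each iterate, keeping the dependence on $\theta$ implicit through the Lipschitz and boundedness assumptions.
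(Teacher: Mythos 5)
Your proposal is correct and follows essentially the same route as the paper: the paper's proof likewise formulates TB estimation as importance sampling with weights $w(\tau)=P_{target}(\tau)/P_{sample}(\tau)\le D$, derives the variance bound linear in $D$ (via the change of measure and $\mathbb{E}_{P_{sample}}[w]=1$), and then — after first exhibiting the weaker Hoeffding bound with its $D^2$ dependence, which you correctly identified and skipped — applies Bernstein's inequality in the variance-dominated small-$\epsilon$ regime to obtain $N=\mathcal{O}\bigl(D\log(1/\delta)/\epsilon^2\bigr)$. Your closing caveat about bridging estimator concentration to genuine $\epsilon$-accuracy of the learned distribution is a real gap that the paper's proof also leaves implicit (it only controls the gradient estimator at a fixed $\theta$), so your reading of the theorem as a per-iterate Monte Carlo guarantee matches what is actually proved.
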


This theorem demonstrates that the sample complexity depends critically on how well the sampling distribution covers trajectories that have high probability under the target distribution. This has several practical implications:
\begin{itemize}
    \item Sampling strategies that prioritize exploration of diverse trajectories can reduce the discrepancy measure and accelerate convergence.
    \item As training progresses and $P_F$ approaches $P_{target}$, using on-policy sampling becomes increasingly efficient.
    \item Adaptive sampling strategies that dynamically adjust $P_{sample}$ to minimize discrepancy can significantly reduce sample complexity.
    \item The discrepancy measure $D(P_{sample},P_{target})$ provides a formal metric for evaluating the quality of different sampling strategies in GFlowNet training.
\end{itemize}
In general, this theorem formalizes the intuition that proper coverage of the trajectory space is essential for efficient GFlowNet learning, particularly for the TB objective.

Next, we investigate how the order of samples impact the learning behavior of GFlowNets.

\begin{proposition}[Order-Dependent Convergence]
    \label{prop:order_convergence}
    For a GFlowNet trained with sequential sampling of trajectories $\{\tau_1,...,\tau_N\}$, the flow distribution after training satisfies:
    \begin{equation*}
        \|F_\theta-F^*\|_1\leq C\cdot\sum_{i=1}^N\alpha_i\cdot\|F_{\tau_i}-F^*\|_1
    \end{equation*}
    where $F^*$ is the optimal flow, $F_{\tau_i}$ is the flow along trajectory $\tau_i$, and $\{\alpha_i\}$ are path-dependent weights with $\sum_i\alpha_i=1$.
\end{proposition}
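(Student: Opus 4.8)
The plan is to model sequential training as a sequence of convex interpolations toward per-trajectory target flows, so that the final iterate becomes an affine combination of these targets with unit coefficient sum; the claimed bound then follows from a single application of the triangle inequality together with convexity.

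First I would formalize the update. Processing trajectory $\tau_i$ with effective mixing weight $\gamma_i$ produces the interpolation $F^{(i)} = (1-\gamma_i)F^{(i-1)} + \gamma_i F_{\tau_i}$, where $F_{\tau_i}$ is the flow minimizing the per-trajectory objective along $\tau_i$ (the target that the step moves toward) and $0 \le \gamma_i \le 1$ is determined by the learning rate. For the FM and TB objectives this is justified because the loss restricted to a single trajectory is quadratic in the edge flows, so one gradient step moves the current flow a $\gamma_i$-fraction of the way toward the trajectory-consistent solution; under the exponential parameterization $F_\theta = \exp(W_\theta)$ the same statement holds in log-flow space to first order, with the linearization error controlled by the $l$-Lipschitz assumption of Theorem~\ref{thm:conv_FM}.

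Next I would unroll this recursion from the initialization $F^{(0)}$ to the final flow $F_\theta = F^{(N)}$, obtaining $F_\theta = \big(\prod_{j=1}^N (1-\gamma_j)\big) F^{(0)} + \sum_{i=1}^N \gamma_i \big(\prod_{j=i+1}^N (1-\gamma_j)\big) F_{\tau_i}$. Writing $\alpha_i = \gamma_i \prod_{j=i+1}^N (1-\gamma_j)$ for the scalar coefficient of $F_{\tau_i}$, these weights are non-negative and, together with the initialization weight $\alpha_0 = \prod_{j=1}^N (1-\gamma_j)$, satisfy $\alpha_0 + \sum_{i=1}^N \alpha_i = 1$ by a telescoping identity. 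The coefficients are \emph{path-dependent} precisely because each $\alpha_i$ contains the product of the decay factors $1-\gamma_j$ over all trajectories processed \emph{after} $\tau_i$; reordering the trajectories changes these products, which is the order-dependence asserted in the statement. Subtracting $F^*$ and using $F^* = \alpha_0 F^* + \sum_i \alpha_i F^*$ gives $F_\theta - F^* = \alpha_0 (F^{(0)} - F^*) + \sum_{i=1}^N \alpha_i (F_{\tau_i} - F^*)$.

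Finally I would take $\ell_1$ norms and apply the triangle inequality, yielding $\|F_\theta - F^*\|_1 \le \alpha_0 \|F^{(0)} - F^*\|_1 + \sum_{i=1}^N \alpha_i \|F_{\tau_i} - F^*\|_1$. Absorbing the initialization discrepancy into the constant (by assuming $F^{(0)}$ is initialized near $F^*$, or noting $\alpha_0 \to 0$ as $N$ grows, or bounding $\|F^{(0)}-F^*\|_1$ by $\max_i \|F_{\tau_i}-F^*\|_1$) and renormalizing the remaining weights so that $\sum_{i=1}^N \alpha_i = 1$ produces the stated bound with a constant $C$ that also carries the first-order linearization error of the exponential parameterization. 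I expect the principal obstacle to be the first step: rigorously justifying the scalar convex-interpolation form of a single update under the exponential parameterization, where convex combinations in the log-flow coordinates $W_\theta$ become geometric-mean-type combinations of the flows themselves. Handling this cleanly requires either working throughout in log-flow space and transferring the final bound back via the Lipschitz constant (which then feeds into $C$), or linearizing each update around the current iterate and controlling the accumulated higher-order terms along the trajectory of length $N$.
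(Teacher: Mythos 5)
Your proposal follows essentially the same route as the paper's proof: both model each update as a convex interpolation $F^{(i)} \approx (1-\beta_i)F^{(i-1)} + \beta_i F_{\tau_i}$ toward the trajectory-specific optimizer, unroll the recursion into a convex combination with telescoping weights $\alpha_i = \beta_i\prod_{j>i}(1-\beta_j)$, apply the triangle inequality in $\ell_1$, and absorb the initialization term into the constant $C$. The only difference is in how the interpolation step is justified --- the paper derives it from the parameter-space SGD update via a first-order Taylor expansion of $F_\theta$ and an isotropy assumption $J_{i-1}J_{i-1}^\top \approx \gamma_{i-1} I$ on the flow Jacobian, while you posit it directly in flow space and flag the exponential-parameterization issue --- but both justifications are heuristic to the same degree, so your argument matches the paper's proof in both structure and rigor.
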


From Proposition \ref{prop:order_convergence}, we can conclude several insights:
\begin{itemize}
    \item The weights $\alpha_i$ depend on the sequence of trajectories, with earlier trajectories typically having less influence due to the cumulative effect of subsequent updates.
    \item The bound demonstrates why GFlowNet training exhibits path-dependent behavior – the final solution depends not just on which trajectories were sampled, but also on their order.
    \item The proposition formalizes how GFlowNets perform incremental learning, with each new trajectory modifying the current flow estimate.
\end{itemize}
This result helps explain observed phenomena in GFlowNet training, particularly why training dynamics can vary significantly based on the sequence of sampled trajectories, even when the same trajectories are used in different orders.

We next investigate how the training is related to the trajectory length and state size. Before going into the details, we make the following assumptions. It should be noted that these assumptions are quite mild in practice.
\begin{assumption}[State Visitation Distribution Assumption]
\label{assump:state_dist}
    Let $\pi_t(s)$ be the probability of state $s$ appearing in a trajectory sampled at training iteration $t$. We assume there exists a constant $c>0$ such that:
    \begin{equation*}
        \min_{s \in S, t \in \{1,2,...,T\}} \pi_t(s) \geq \frac{c}{|S|}
    \end{equation*}
\end{assumption}
\begin{assumption}[Error Independence Assumption]
\label{assump:err_ind}
    For a trajectory $\tau=(s_0,s_1,...,s_L)$, the estimation errors $\epsilon_i$ for flows on edge $(s_{i-1},s_i)$ satisfies:
    \begin{equation*}
        \text{Cov}(\varepsilon_i, \varepsilon_j) \leq \rho^{|i-j|} \cdot \sigma^2
    \end{equation*}
    for some constant $\rho\in[0,1)$ and variance $\sigma^2$. This bounds the correlation between errors at different steps of the trajectory.
\end{assumption}
\begin{assumption}[Information Content Assumption]
\label{assump:info_cont}
    Each trajectory $\tau$ of length $L$ provides $\Omega(L)$ independent constraints on the linear system $\mathbf{Af}=\mathbf{b}$. Formally, if $\mathbf{A}_\tau$ represents the submatrix of $\mathbf{A}$ corresponding to the constraints from trajectory $\tau$, then:
    \begin{equation}
        \mathrm{rank}(\mathbf{A}_\tau) = \Omega(L)
    \end{equation}
\end{assumption}
\begin{assumption}[Error Propagation Assumption]
\label{assump:err_prop}
    The error in estimating the flow along a trajectory $\tau$ of length $L$ scales as:
    \begin{equation}
        \|\hat{F}_\tau - F^*_\tau\|_2 = \mathcal{O}(\sqrt{L} \cdot \epsilon_{\text{edge}})
    \end{equation}
    where $\epsilon_\text{edge}$ is the average error in individual edge flow estimates.
\end{assumption}
Now that we present the following theorem.
\begin{theorem}[Sample-Trajectory Length Trade-off]
    \label{thm:traj_len_tradeoff}
    For a GFlowNet with maximum trajectory length $L$ and state space size $|S|$, to achieve an $\epsilon$-accurate flow distribution (in terms of total variation distance), the required sample complexity is:
    \begin{equation}
        N=\mathcal{O}\left( \frac{|S|L\log(|S|/\delta)}{\epsilon^2}\right)
    \end{equation}
\end{theorem}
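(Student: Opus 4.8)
The plan is to bound the total variation distance by controlling the per-edge estimation error and then aggregating these errors along trajectories using the four stated assumptions. First I would reduce the goal to controlling the maximum edge-flow error $\epsilon_{\text{edge}}$: since the terminal flow distribution---and hence the induced sampling distribution---is a deterministic function of the edge flows $\mathbf{f}$ solving the linear system $\mathbf{Af}=\mathbf{b}$ of Proposition~\ref{prop:linear}, small uniform errors on edge flows translate into small errors on the normalized terminal distribution. By Assumption~\ref{assump:info_cont}, each trajectory of length $L$ contributes $\Omega(L)$ independent constraints, so after $N$ trajectories the number of informative constraints is $\Omega(NL)$, ensuring the recovery problem for $\mathbf{f}$ is well-posed and that every edge flow is genuinely identifiable from the samples.

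Second, I would use Assumption~\ref{assump:state_dist} to lower-bound the number of samples touching each state. Since $\pi_t(s)\geq c/|S|$ uniformly, a Chernoff/Bernstein concentration argument shows that after $N$ trajectories every state, and every incident edge, is visited at least $m=\Omega(cN/|S|)$ times with probability at least $1-\delta/|S|$. Estimating a single edge flow then becomes a mean-estimation problem: treating the per-sample contributions as bounded random variables of variance $\sigma^2$, standard concentration gives $\epsilon_{\text{edge}}=\mathcal{O}(\sigma/\sqrt{m})=\mathcal{O}(\sqrt{|S|/N})$. Here Assumption~\ref{assump:err_ind} is essential: the correlated errors within a trajectory decay geometrically as $\rho^{|i-j|}$, so the effective variance of the aggregated estimator is inflated by at most the constant factor $1/(1-\rho)$, which preserves the $1/\sqrt{m}$ rate rather than degrading it.

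Third, I would lift the per-edge error to a per-trajectory error via Assumption~\ref{assump:err_prop}, obtaining $\|\hat{F}_\tau-F^*_\tau\|_2=\mathcal{O}(\sqrt{L}\cdot\epsilon_{\text{edge}})=\mathcal{O}(\sqrt{L|S|/N})$. A standard relation between flow discrepancy and total variation distance---the TV distance of the terminal distribution is controlled by the normalized flow error---then yields $d_{\mathrm{TV}}=\mathcal{O}(\sqrt{L|S|/N})$ on each well-estimated state. Finally, to guarantee $\epsilon$-accuracy uniformly over all $|S|$ states simultaneously, I would take a union bound over the $|S|$ concentration events, which replaces $\log(1/\delta)$ by $\log(|S|/\delta)$ in the required sample count. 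Setting $\sqrt{L|S|\log(|S|/\delta)/N}\leq\epsilon$ and solving for $N$ gives the claimed $N=\mathcal{O}\!\left(|S|L\log(|S|/\delta)/\epsilon^2\right)$.

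The hard part will be the second step: rigorously translating the uniform visitation lower bound into a high-probability per-edge sample count while simultaneously controlling the correlated errors from Assumption~\ref{assump:err_ind}. The interaction between the geometric error decay and the concentration over visitation counts is delicate, and care is needed to ensure that the union bound over states and the variance-inflation factor contribute only a logarithmic and a constant overhead, respectively, rather than compounding into a worse dependence on $|S|$ or $L$. A secondary subtlety is verifying that the reduction from edge-flow error to total variation distance does not introduce an additional factor scaling with $|S|$, which would require exploiting the normalization structure of the flow network rather than a naive triangle-inequality bound over terminal states.
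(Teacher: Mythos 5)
Your proposal is correct at the same level of rigor as the paper's own argument, and it uses all four assumptions in essentially the same roles, but the organization is genuinely different. The paper splits the requirement into two separate bounds and takes their maximum: a \emph{coverage} bound $N_{cover}=\mathcal{O}(|S|\log(|S|/\delta))$ obtained from a coupon-collector argument under Assumption~\ref{assump:state_dist}, and an \emph{accuracy} bound $N_{accurate}=\mathcal{O}(|S|\log(1/\delta)/\epsilon^2)$ obtained by invoking external linear-regression sample-complexity results with the effective dimension per trajectory reduced from $|E|\leq|S|L$ to $\mathcal{O}(L)$ via Assumption~\ref{assump:info_cont}; it then inflates the combined bound by a factor of $L$ because Assumptions~\ref{assump:err_ind} and~\ref{assump:err_prop} force the per-step error down to $\mathcal{O}(\epsilon/\sqrt{L})$. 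You instead run a single unified pipeline: visitation counts $m=\Omega(cN/|S|)$ per edge (your use of Assumption~\ref{assump:state_dist}), direct per-edge mean estimation with the $1/(1-\rho)$ variance-inflation constant from Assumption~\ref{assump:err_ind}, lifting to trajectory error $\sqrt{L}\cdot\epsilon_{\text{edge}}$ via Assumption~\ref{assump:err_prop}, and a union bound over states supplying the $\log(|S|/\delta)$; setting the trajectory error below $\epsilon$ recovers the same $L$ factor the paper gets from its error-accumulation step, just expressed on the other side of the inequality. What your route buys is self-containedness (no appeal to Rademacher-complexity regression bounds) and an explicit accounting of where the $\log(|S|/\delta)$ comes from, which the paper handles loosely by merging $\log(2/\delta)$ into $\log(|S|/\delta)$ ``up to constants''; what the paper's route buys is a cleaner separation of the two qualitatively different failure modes (not seeing a state at all versus estimating it poorly). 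You are also more candid than the paper about the two genuine soft spots: neither proof makes precise what the per-edge ``observation'' is (flows are learned by minimizing an objective, not measured), and the paper never actually connects its flow-error bound to the total variation metric stated in the theorem, a reduction you at least flag as requiring the normalization structure to avoid a spurious $|S|$ factor.
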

This theorem captures the three key factors affecting GFlowNet sample complexity: 1. The size of the state space $|S|$; 2. The maximum trajectory length $|L|$; 3. The desired accuracy $\epsilon$ in the flow distribution. The logarithmic dependence on $1/\delta$ is standard in high-probability bounds, ensuring the result holds with probability at least $1-\delta$.

\begin{proposition}[Error Accumulation]
\label{prop:err_accu}
    For a trajectory $\tau=(s_0,s_1,...,s_L)$ of length $L$, the error in the flow estimation propagates as:
    \begin{equation}
        \mathbb{E}[|F_\theta(\tau)-F^*(\tau)|^2]\leq C\cdot(1+\gamma)^L\cdot\sum_{i=0}^{L-1}\mathbb{E}[|F_\theta(s_i\rightarrow s_{i+1})-F^*(s_i\rightarrow s_{i+1})|^2]
    \end{equation}
    where $\gamma>0$ and $C>0$ are constants, under the following assumptions:
    \begin{enumerate}
        \item There exist positive constants $m$ and $M$ such that $m<F_\theta(s)<M$ and $m<F^*(s)<M$ for all $s\in S$.
        \item The transition probabilities are bounded away from zero: $P_{F_\theta}(s_{i+1}|s_i)\geq\epsilon>0$ and $P_{F^*}(s_{i+1}|s_i)\geq\epsilon>0$.
    \end{enumerate}
\end{proposition}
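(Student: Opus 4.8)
The plan is to read $F_\theta(\tau)$ as the multiplicative trajectory flow $F_\theta(\tau)=\prod_{i=0}^{L-1}F_\theta(s_i\to s_{i+1})$ — the interpretation consistent with the per-edge errors appearing on the right-hand side — so that the object to control is a difference of two length-$L$ products. First I would record the uniform bounds handed over by the two hypotheses: since $F_\theta(s_i\to s_{i+1})=F_\theta(s_i)\,P_{F_\theta}(s_{i+1}\mid s_i)$ with $F_\theta(s_i)<M$ and $P_{F_\theta}\le 1$ (and likewise for $F^*$), every edge flow along $\tau$ sits in the deterministic interval $[m\epsilon,M]$, the lower end using $F(s)>m$ together with $P_F\ge\epsilon$. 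Because these bounds are pathwise and nonrandom, all multiplicative prefactors can later be pulled outside the expectation.

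The heart of the argument is a one-step recursion for the partial products. Writing $F_\theta^{(k)}=\prod_{i<k}F_\theta(s_i\to s_{i+1})$, $e_k=F_\theta^{(k)}-F^{*(k)}$, and $\delta_k=F_\theta(s_{k-1}\to s_k)-F^*(s_{k-1}\to s_k)$, the add-and-subtract identity yields the exact pathwise relation
\[
e_k=F_\theta(s_{k-1}\to s_k)\,e_{k-1}+F^{*(k-1)}\,\delta_k .
\]
Applying the weighted inequality $(a+b)^2\le(1+\gamma_0)a^2+(1+\gamma_0^{-1})b^2$ pointwise and then the bounds $F_\theta(\cdot)^2\le M^2$ and $(F^{*(k-1)})^2\le M^{2(k-1)}$ gives
\[
e_k^2\le(1+\gamma_0)M^2\,e_{k-1}^2+(1+\gamma_0^{-1})M^{2(k-1)}\,\delta_k^2 .
\]
Taking expectations (the coefficients are constants) turns this into a linear recurrence in $\mathbb{E}[e_k^2]$, which I unroll from $k=L$ down to $k=1$.

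Unrolling the geometric recursion produces
\[
\mathbb{E}[e_L^2]\le(1+\gamma_0^{-1})M^{2L-2}\sum_{k=1}^L(1+\gamma_0)^{L-k}\,\mathbb{E}[\delta_k^2],
\]
and bounding $(1+\gamma_0)^{L-k}\le(1+\gamma_0)^L$ collapses the sum to the target shape. The main obstacle is precisely this final bookkeeping: the factor $M^{2L-2}$ coming from a product of uniformly bounded edge flows is itself exponential in $L$, so to present the result as a single $C\,(1+\gamma)^L$ one must fold it into the rate. Concretely I would set $1+\gamma=M^2(1+\gamma_0)$ and $C=(1+\gamma_0^{-1})M^{-2}$, so that $(1+\gamma_0^{-1})M^{2L-2}(1+\gamma_0)^L=C\,(1+\gamma)^L$ and the claimed bound follows.

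A telescoping route reaches the same destination: expand the difference of products as $e_L=\sum_{k=1}^L(\text{prefix of }F_\theta)\,\delta_k\,(\text{suffix of }F^*)$, bound each prefix and suffix by $M^{L-1}$, and apply Cauchy--Schwarz to convert $\bigl(\sum_k|\delta_k|\bigr)^2$ into $L\sum_k\delta_k^2$; here the polynomial factor $L$ and the $M^{2L}$ are jointly absorbed into $C(1+\gamma)^L$. I would also flag that the lower bounds $m$ and $\epsilon$ are inessential for this upper estimate — only the upper bound $M$ on the factors is used — and that they become relevant only if one prefers to argue in log-flow coordinates, where positivity keeps the factors away from degeneracy. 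No correlation structure among the $\delta_k$ is required, since the inequality is established pathwise before the expectation is taken.
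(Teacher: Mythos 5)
Your argument is internally sound for the quantity you chose to bound, but that quantity is not the one the proposition is about. You read $F_\theta(\tau)$ as the raw product of edge flows $\prod_{i=0}^{L-1}F_\theta(s_i\to s_{i+1})$. The paper, following the standard GFlowNet convention, defines the trajectory flow as $F_\theta(\tau)=Z_\theta\prod_{i=0}^{L-1}P_{F_\theta}(s_{i+1}\mid s_i)$, i.e.\ the product of forward transition probabilities times the partition function --- equivalently, the product of edge flows divided by the product of intermediate state flows $\prod_{i=1}^{L-1}F_\theta(s_i)$. The two objects differ by exactly that normalizing factor, so your bound does not imply the stated one. The tell-tale sign is your own closing remark that the lower bounds $m$ and $\epsilon$ in the hypotheses are ``inessential'': under the intended definition they are essential, and a proof that never touches the stated hypotheses is generally proving a different statement.

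Concretely, with the normalized definition your one-step recursion still works formally --- the multiplier becomes $P_{F_\theta}(s_k\mid s_{k-1})\le 1$ and the increment becomes $F^*(\tau_{0:k-1})\bigl(P_{F_\theta}(s_k\mid s_{k-1})-P_{F^*}(s_k\mid s_{k-1})\bigr)$ --- but the increment is now a difference of transition \emph{probabilities}, not of edge \emph{flows}, and the whole difficulty of the proposition is converting the former into the latter. This is where the paper's proof does its real work: it expands $P_{F_\theta}(s'\mid s)-P_{F^*}(s'\mid s)$ as a difference of quotients $F(s\to s')/F(s)$, uses the lower bound $m$ to control the denominators, obtaining a bound of the form $\tfrac{1}{m}\,|F_\theta(s\to s')-F^*(s\to s')|+\tfrac{M}{m^2}\,|F_\theta(s)-F^*(s)|$, and then invokes flow conservation to bound the state-flow error $|F_\theta(s)-F^*(s)|$ by a sum of incoming edge-flow errors. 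Your recursion-plus-Young bookkeeping is actually cleaner than the paper's induction with its max-based recurrence $C_{l+1}=\max(2C_l+E,\,4D/m^2)$, and could be retained; but the quotient/conservation step you skipped is the heart of the proof, and it must be supplied before your proposal establishes the proposition as intended.
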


This proposition quantifies how errors accumulate along trajectories, explaining why longer trajectories are more challenging to learn accurately. The assumptions about bounded flows and transition probabilities are reasonable in practical GFlowNet implementations, especially when using common parameterizations like softmax for transition probabilities. This result has important implications for GFlowNet training strategies, suggesting that: 1. Shorter trajectories will generally be easier to learn accurately; 2. Error control mechanisms become increasingly important as trajectory length increases; 3. The sample complexity must scale with trajectory length to maintain a fixed error bound.

\subsection{Implicit Regularization}\label{sec:im_reg}
Implicit regularization is a fundamental aspect of training dynamics in machine learning models, often shaping their generalization performance and optimization outcomes without explicit penalties in the objective. In the context of GFlowNets, implicit regularization emerges prominently through the choice of training objectives, particularly the widely-adopted Trajectory Balance (TB) and Detailed Balance (DB) objectives. These objectives not only guide the learning process towards distributional consistency but also impose distinct forms of regularization on the learned policies. In this section, we theoretically dissect the implicit regularization effects of TB and DB, revealing their connections to maximum entropy and Kullback–Leibler (KL) divergence regularization, respectively. Specifically, we demonstrate how the TB objective inherently encourages max-entropy policies, promoting exploration and distributional smoothness, while the DB objective aligns closely with KL-regularized optimization, favoring precise adherence to the target distribution. By formalizing these observations and analyzing their implications, we aim to uncover the often-unspoken yet crucial role of implicit regularization in determining the expressiveness, robustness, and generalization capabilities of GFlowNet policies. All proofs are in Appendix \ref{app:imp_reg}.

\begin{theorem}[Implicit Max-Entropic Regularization of FM]
\label{thm:FM_implicit_reg}
    The Flow Matching (FM) objective, when parameterized as $F_\theta(s\rightarrow s')=\exp(W_\theta(s,s')$, implicitly maximizes the entropy of the flow distribution subject to flow constraints.
\end{theorem}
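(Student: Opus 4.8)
The plan is to prove the statement through the classical maximum-entropy / exponential-family duality: I would show that the exponential parameterization is precisely the functional form of the entropy-maximizing flow subject to the conservation constraints, and that driving the FM loss to zero selects exactly this feasible point. First I would make the variational problem explicit. Normalizing the edge flows by the partition $Z$ yields a distribution $\tilde f_e = F_\theta(e)/Z$ over the edge set $E$, with Shannon entropy $H(\tilde f) = -\sum_{e\in E}\tilde f_e\log\tilde f_e$. The feasible set is the flow polytope carved out by the linear system $\mathbf{Af}=\mathbf{b}$ of Proposition~\ref{prop:linear} together with $\mathbf f\ge 0$, and the goal is to characterize the maximizer of $H(\tilde f)$ over this polytope.

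Next I would form the Lagrangian, attaching a multiplier $\lambda_s$ to each node constraint (each row of $\mathbf A$) and one to the normalization, and impose stationarity $\partial/\partial f_e = 0$. Since each column of the incidence matrix $\mathbf A$ carries a $\pm 1$ pattern supported on the two endpoints of its edge, the stationarity condition forces $\log\tilde f_{(s\to s')}$ to equal the difference of the multipliers of the endpoints up to an additive constant, i.e. $F_\theta(s\to s')\propto\exp(\lambda_s-\lambda_{s'})$ (up to the incidence sign convention). This is exactly the exponential/Gibbs form, so identifying $W_\theta(s,s')=\lambda_s-\lambda_{s'}$ shows that the ansatz $F_\theta(s\to s')=\exp(W_\theta(s,s'))$ is nothing but the max-entropy solution with the Lagrange multipliers acting as learnable node potentials; strict concavity of $H$ on the polytope makes the maximizer unique.

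Finally I would connect this to the FM objective itself. A direct computation of $\partial\mathcal{L}_{\mathrm{FM}}/\partial W_\theta(s,s')$ under the exponential parameterization produces a factor $F_\theta(s\to s')$ multiplying the difference of the conservation residuals at the two endpoints of the edge; because all flows are strictly positive, every stationary point must annihilate these residuals, so the global minimizers of $\mathcal{L}_{\mathrm{FM}}$ coincide exactly with the conservation-feasible flows. Combined with the previous step, the feasible flow realized inside the exponential family is precisely the unique entropy maximizer, which establishes that FM implicitly maximizes entropy subject to the flow constraints.

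The main obstacle is that the raw parameterization $F_\theta=\exp(W_\theta)$ is, on its own, expressive enough to represent \emph{any} strictly positive flow, so max-entropy selection does not follow from the parameterization alone: one must argue why the optimizer lands on the canonical node-potential representative rather than an arbitrary feasible flow. I expect the crux to be making this selection rigorous, either by restricting $W_\theta$ to the incidence-induced node-potential subspace (where the feasible point is unique and coincides with the information projection onto the feasible set) or, more ambitiously, by invoking the implicit bias of gradient descent in the exponential/softmax geometry, recasting the dynamics as a mirror-descent flow under the entropic mirror map and showing its limit is the $I$-projection onto the feasible polytope. Controlling this Bregman-geometry argument, and ruling out convergence to non-canonical feasible flows, is the technically demanding part.
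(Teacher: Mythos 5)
Your constructive steps are essentially the paper's own proof: the paper sets up the same entropy functional $\mathbb{H}(F)=-\sum_{(s,s')\in E}\frac{F(s\rightarrow s')}{Z}\log\frac{F(s\rightarrow s')}{Z}$, attaches a multiplier $\lambda_s$ to each conservation constraint, solves the stationarity condition of the Lagrangian to obtain $F(s\rightarrow s')=C\cdot\exp(\phi_{s'}-\phi_s)$ with node potentials $\phi_s$ built from the multipliers, and then identifies this form with the parameterization $F_\theta(s\rightarrow s')=\exp(W_\theta(s,s'))$. Your additional verification that stationary points of $\mathcal{L}_{\mathrm{FM}}$ under the exponential parameterization annihilate the conservation residuals (so that FM minimizers are exactly the feasible flows) does not appear in the paper, but it is a correct and natural supplement.

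The substantive point is your final paragraph: the obstacle you identify is real, and the paper does not overcome it. The paper's proof terminates precisely at the form-matching step --- it declares that $W_\theta(s,s')$ ``plays the role of'' the optimized potential difference $W(s,s')=\phi_{s'}-\phi_s$ and concludes. As you observe, this inference is not valid as stated: the max-entropy solution lies in the (at most) $|S|$-dimensional family of potential differences, whereas an unconstrained $W_\theta(s,s')$ ranges over all of $\mathbb{R}^{|E|}$ and can realize \emph{any} strictly positive flow --- in particular any feasible flow, not just the entropy-maximizing one. So matching the functional form establishes only that the max-entropy flow is \emph{representable}, not that training \emph{selects} it. Closing the gap requires exactly one of the two ingredients you name: restricting the architecture to node-potential differences (where feasibility together with the exponential form constitutes the KKT system of the strictly concave entropy program, giving uniqueness), or an implicit-bias analysis of the gradient dynamics in the entropic/mirror-descent geometry. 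Neither appears in the paper, so your diagnosis is a critique of the published proof rather than a defect of your own plan; carrying out either route you sketch would yield a statement strictly stronger than what the paper actually establishes.
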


Theorem \ref{thm:FM_implicit_reg} establishes that GFlowNets with exponential parameterization and the FM objective are performing maximum entropy inference, which has connections to principles in statistical physics and information theory. In this mechanism, maximum entropy acts as an implicit regularizer, preventing the flow from concentrating too heavily on a few paths and encouraging exploration of diverse trajectories. This also supports the observation that GFlowNet can produce diverse samples. Furthermore, we can conclude the implicit regularization mechanism with DB objective as follows.

\begin{proposition}[Implicit KL-Regularization of DB]
\label{prop:impli_reg_DB}
    The Detailed Balance (DB) objective for GFlowNets induces an implicit regularization effect. Specifically, when the forward and backward processes are close to satisfying detailed balance, minimizing the DB objective approximates minimizing the KL-divergence between the joint state-transition distributions of the forward and backward processes.
\end{proposition}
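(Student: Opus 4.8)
The plan is to show that the DB objective is, up to an overall normalization, exactly the Pearson $\chi^2$-divergence between the forward and backward joint state-transition distributions, and then to invoke the standard second-order equivalence between the $\chi^2$-divergence and the KL-divergence in the near-detailed-balance regime. First I would make the two joint distributions explicit. Using the reparameterization $F_\theta(s \to s') = F_\theta(s)\,P_F(s'|s)$, define the forward measure $P_{\mathrm{fwd}}(s,s') = F_\theta(s)\,P_F(s'|s)/N$ and the backward measure $P_{\mathrm{bwd}}(s,s') = F_\theta(s')\,P_B(s|s')/N$ over edges $(s,s') \in E$, where $N = \sum_{s} F_\theta(s)$. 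Because $P_F(\cdot|s)$ and $P_B(\cdot|s')$ are each normalized and the flow is conserved (Definition~\ref{def:flow_func}), both $P_{\mathrm{fwd}}$ and $P_{\mathrm{bwd}}$ sum to one and share the same normalizer $N$; this common normalization is exactly what will make the linear term of the later expansion vanish. With these definitions the ratio appearing in Eq.~\ref{eq:obj_db} is precisely $r(s,s') = F_\theta(s\to s')/\bigl(F_\theta(s')\,P_B(s|s')\bigr) = P_{\mathrm{fwd}}(s,s')/P_{\mathrm{bwd}}(s,s')$.

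Next I would rewrite the objective as a divergence. Sampling $\tau \sim P_B$ and accumulating the per-edge residual induces, in expectation, a weighting of each transition $(s,s')$ proportional to its backward occupancy $P_{\mathrm{bwd}}(s,s')$, so that
\[
\mathcal{L}_{\mathrm{DB}}(\theta) \;\propto\; \sum_{(s,s') \in E} P_{\mathrm{bwd}}(s,s')\left(\frac{P_{\mathrm{fwd}}(s,s')}{P_{\mathrm{bwd}}(s,s')} - 1\right)^{2} \;=\; \chi^{2}\!\left(P_{\mathrm{fwd}} \,\|\, P_{\mathrm{bwd}}\right).
\]
This is an exact identity once the occupancy weighting is established, and it already expresses the DB loss as an $f$-divergence between the forward and backward processes.

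Then I would pass from $\chi^2$ to KL using the near-detailed-balance hypothesis. Writing $P_{\mathrm{fwd}}(s,s') = P_{\mathrm{bwd}}(s,s')\,\bigl(1 + \delta(s,s')\bigr)$ with $\delta$ small in the regime where detailed balance is nearly satisfied, a second-order Taylor expansion of $t \mapsto (1+t)\log(1+t)$ gives $D_{\mathrm{KL}}(P_{\mathrm{fwd}} \,\|\, P_{\mathrm{bwd}}) = \tfrac{1}{2}\sum P_{\mathrm{bwd}}\,\delta^{2} + O(\delta^{3})$, whereas $\chi^{2}(P_{\mathrm{fwd}} \,\|\, P_{\mathrm{bwd}}) = \sum P_{\mathrm{bwd}}\,\delta^{2}$; the linear term $\sum P_{\mathrm{bwd}}\,\delta = \sum (P_{\mathrm{fwd}} - P_{\mathrm{bwd}}) = 0$ cancels exactly because of the shared normalization established above. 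Hence $\mathcal{L}_{\mathrm{DB}}(\theta) \propto \chi^{2} = 2\,D_{\mathrm{KL}}(P_{\mathrm{fwd}} \,\|\, P_{\mathrm{bwd}}) + O(\|\delta\|^{3})$, so minimizing the DB objective approximates minimizing the stated KL-divergence, which proves the claim.

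The main obstacle I anticipate is the occupancy identification in the second step: justifying rigorously that the trajectory-level expectation $\mathbb{E}_{\tau \sim P_B}$ collapses to a transition-level weighting by $P_{\mathrm{bwd}}(s,s')$ requires relating the backward trajectory distribution to the edge-marginal it induces, which in turn leans on flow conservation and on each edge being reachable under the backward process. A secondary difficulty is controlling the $O(\|\delta\|^{3})$ remainder uniformly, which needs the flows to be bounded away from zero (as in the hypotheses of Proposition~\ref{prop:err_accu}) so that the logarithm stays well-behaved; I would state this boundedness explicitly as the precise meaning of ``close to satisfying detailed balance.''
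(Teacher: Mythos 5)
Your proposal is correct and takes essentially the same route as the paper's proof: the paper likewise defines joint forward/backward transition distributions $J_F(s,s')=P_F(s'|s)\pi(s)$ and $J_B(s,s')=P_B(s|s')\pi(s')$, takes the transition-sampling distribution to be the backward joint, writes the DB loss as $\sum_{s,s'} J_B(s,s')\,(r(s,s')-1)^2$, and relates this quadratic form to $D_{KL}(J_F\|J_B)\approx\tfrac{1}{2}\mathcal{L}_{DB}$ by a second-order expansion (phrased there through the Bregman-divergence/Hessian-of-negative-entropy framework, which is the same calculation as your direct Taylor expansion of $(1+t)\log(1+t)$). Your explicit check that the shared normalization of the two joint measures kills the linear term is a detail the paper glosses over, but it is a refinement of, not a departure from, the paper's argument.
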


In summary, the implicit regularization effects induced by the Trajectory Balance (TB) and Detailed Balance (DB) objectives play a pivotal role in shaping the learning dynamics and generalization properties of GFlowNets. Our analysis highlights the distinct nature of these objectives: while TB promotes maximum entropy regularization that encourages exploration and robustness to uncertain or noisy reward landscapes, DB imposes KL-regularization that favors precision and tighter adherence to the target distribution. These regularization effects not only influence the optimal policy learned by GFlowNets but also impact their behavior in scenarios with sparse rewards or complex generative tasks. Importantly, understanding these implicit biases provides practical guidelines for selecting the appropriate objective based on the structure of the problem and the desired trade-offs between exploration and exploitation. As GFlowNets continue to gain traction in applications, a more nuanced understanding of the interplay between objectives, implicit regularization, and task-specific requirements will be instrumental in their effective deployment and further development.

\subsection{Robustness}\label{sec:robustness}
Robustness is a critical property for any generative modeling framework, particularly in real-world scenarios where noise and imperfections in the data, reward signals, or optimization process are inevitable. For GFlowNets, robustness directly affects their ability to maintain reliable performance and approximate the target distribution accurately in the presence of small perturbations. In this section, we theoretically examine the robustness of GFlowNets by analyzing the objective error and the resulting distribution drift under small noise in the reward function, transition dynamics, or policy updates. Additionally, we explore the implications of these perturbations on sample complexity, identifying how noise levels impact the number of trajectories required to achieve an accurate distributional approximation. By quantifying the sensitivity of GFlowNet objectives to noise and characterizing the trade-offs between robustness and sample efficiency, this analysis aims to deepen our understanding of the stability and resilience of GFlowNets. 

\begin{theorem}[Objective Error Under Reward Noise]
\label{thm:obj_err_noise}
    Let $\mathcal{F}_\theta$ be a GFlowNet trained with the Trajectory Balance (TB) objective on rewards $R(s_T)$. If we instead use noisy rewards $\tilde{R}(s_T)=R(s_T)+\varepsilon(s_T)$ where $\varepsilon(s_T)$ represents a noise on dependent on the terminal state $s_T$, then the expected increase in the TB loss is bounded by:
    \begin{equation}
        \mathbb{E}[\mathcal{L}_{TB}(\mathcal{F}_\theta,\tilde{R})-\mathcal{L}_{TB}(\mathcal{F}_\theta,R)]\leq \frac{Z^2}{R_{min}^4}\cdot\mathbb{E}\left[\varepsilon(s_T)^2\right]
    \end{equation}
    where $R_{min}>0$ is the minimal reward value, and $P_(s_T)$ is the probability of the terminal state $s_T$.
\end{theorem}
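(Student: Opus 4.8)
The plan is to exploit that the network $\mathcal{F}_\theta$ --- and hence both $P_F$ and $Z$ --- is held fixed, so the only quantity that changes between $\mathcal{L}_{TB}(\mathcal{F}_\theta, R)$ and $\mathcal{L}_{TB}(\mathcal{F}_\theta, \tilde R)$ is the reward entering the denominator of each trajectory's residual. Writing $u_\tau := P_F(\tau) Z$ and defining the scalar map $g_\tau(r) := \left(u_\tau/r - 1\right)^2$, the object to be bounded is exactly $\mathbb{E}_{\tau \sim P_F}\bigl[g_\tau(\tilde R(s_T)) - g_\tau(R(s_T))\bigr]$, with $\tilde R(s_T) = R(s_T) + \varepsilon(s_T)$. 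Thus the whole problem reduces to controlling the sensitivity of the smooth one-dimensional function $r \mapsto g_\tau(r)$ to a perturbation of size $\varepsilon(s_T)$ localized at the terminal state.

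First I would Taylor-expand $g_\tau$ about $r = R(s_T)$ with Lagrange remainder,
\[
g_\tau(\tilde R) = g_\tau(R) + g_\tau'(R)\,\varepsilon(s_T) + \tfrac{1}{2}\, g_\tau''(\xi)\,\varepsilon(s_T)^2,
\]
for some $\xi$ between $R(s_T)$ and $\tilde R(s_T)$, using $g_\tau'(r) = 2u_\tau/r^2 - 2u_\tau^2/r^3$ and $g_\tau''(r) = 6u_\tau^2/r^4 - 4u_\tau/r^3$. Taking expectation over $\tau$, the zeroth-order terms cancel against $\mathcal{L}_{TB}(\mathcal{F}_\theta, R)$, and the first-order term drops under the natural assumption that the reward noise is centred at each terminal state, $\mathbb{E}[\varepsilon(s_T)\mid s_T] = 0$: indeed $g_\tau'(R(s_T))$ is fully determined by $s_T$ and the fixed network, so $\mathbb{E}[g_\tau'(R)\,\varepsilon(s_T)] = 0$. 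What survives is the second-order remainder $\tfrac{1}{2}\,\mathbb{E}[g_\tau''(\xi)\,\varepsilon(s_T)^2]$.

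It then remains to bound the remainder coefficient uniformly. Since $P_F(\tau)$ is a probability over trajectories we have $u_\tau = P_F(\tau) Z \le Z$, and provided the noise does not breach the reward floor ($\tilde R \ge R_{min}$, hence $\xi \ge R_{min}$), I would estimate $|g_\tau''(\xi)| \le 6u_\tau^2/\xi^4 + 4u_\tau/\xi^3 \le 6Z^2/R_{min}^4 + 4Z/R_{min}^3$; using $Z \ge R_{min}$ (the partition dominates any single reward) the second term is absorbed into the first, giving $|g_\tau''(\xi)| = \mathcal{O}(Z^2/R_{min}^4)$ uniformly in $\tau$. Substituting into the remainder and pulling the constant out yields $\mathbb{E}[\mathcal{L}_{TB}(\mathcal{F}_\theta, \tilde R) - \mathcal{L}_{TB}(\mathcal{F}_\theta, R)] \le \frac{Z^2}{R_{min}^4}\,\mathbb{E}[\varepsilon(s_T)^2]$, the stated coefficient $1$ corresponding to absorbing the numerical constants.

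I expect the main obstacle to be the treatment of the first-order term rather than the algebra: the $\mathbb{E}[\varepsilon^2]$-only form of the bound is correct only once the first-order contribution is eliminated, which forces the (implicit) zero-mean assumption $\mathbb{E}[\varepsilon(s_T)\mid s_T]=0$; without it a term proportional to $\mathbb{E}[\varepsilon(s_T)]$ persists and the bound as stated fails. A secondary subtlety is that the Lagrange point $\xi$ lies between $R$ and $\tilde R$, so the uniform estimate $\xi \ge R_{min}$ --- and hence the whole bound --- hinges on the perturbation keeping the reward above its floor; if large negative noise can drive $\tilde R$ toward $0$, the $1/\xi^4$ factor blows up and no finite constant suffices. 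I would therefore make these two conditions explicit before carrying out the expansion.
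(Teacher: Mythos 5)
Your proposal is correct in its skeleton and matches the paper's setup exactly: hold $\mathcal{F}_\theta$ (hence $P_F$ and $Z$) fixed, reduce the loss difference to the sensitivity of the scalar map $g(r)=(P_F(\tau)Z/r-1)^2$, Taylor-expand in $\varepsilon(s_T)$, kill the linear term, and bound the quadratic term via $P_F(\tau)Z\le Z$ and $R(s_T)\ge R_{min}$. The genuine divergence is in \emph{how} the linear term dies, and it changes what each proof needs and what it delivers. The paper does not assume centred noise in this theorem; instead it invokes the premise that $\mathcal{F}_\theta$ was trained on the true rewards, so $P_F(\tau)Z\approx R(s_T)$, which makes $g'(R(s_T))\approx 0$ pointwise (the factor $(P_F(\tau)Z/R-1)$ vanishes) and simultaneously collapses $g''(R)=6u^2/R^4-4u/R^3$ to $2u^2/R^4$; after the $\tfrac12$ from Taylor this is exactly what produces the stated constant $1$. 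You instead kill the linear term by assuming $\mathbb{E}[\varepsilon(s_T)\mid s_T]=0$ and keep the full second derivative at the Lagrange point $\xi$, which is more rigorous (no dropped $\mathcal{O}(\varepsilon^3)$ terms, explicit condition $\tilde R\ge R_{min}$) and does not require training optimality, but it cannot recover the constant $1$: even using that $-4u/\xi^3<0$ you get $\tfrac12 g''(\xi)\le 3Z^2/R_{min}^4$, so you prove the bound only up to a multiplicative constant, whereas the theorem (and the paper's looser, approximation-based argument) asserts coefficient exactly $1$. Your closing remark that the bound ``fails'' without zero-mean noise also overlooks the paper's alternative escape route: well-trainedness suppresses $g'(R)$ regardless of the noise mean. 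In short, the two proofs trade assumptions — yours needs centred noise and a noise floor but is honest about remainders; the paper's needs $P_F(\tau)Z\approx R(s_T)$ and small noise but hits the advertised constant.
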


We can draw several actionable insights from Theorem \ref{thm:obj_err_noise}: 1. Implement reward thresholding by adding a small positive constant to all rewards can effectively increase $R_{min}$ and dramatically reduce sensitivity to noise; 2. When scaling rewards, multiplicative scaling is preferable; 3. As the bound contains $Z^2$ in the numerator, we can consider normalizing rewards to keep $Z$ bounded while preserving relative reward values; 4. Since the theorem suggests that states with smaller rewards are more sensitive to noise, we can implement reward-aware learning rate schedules that adjust based on the magnitude of rewards encountered during training, reducing learning rates when processing transitions leading to low-reward states.

\begin{theorem}[Distribution Drift Under Zero-Mean Uniform Noise]
\label{thm:dis_drift_zero_mean}
    Let $\mathcal{F}_\theta$ be a GFlowNet trained to convergence with the TB objective on true rewards $R(s_T)$. If we instead use noisy rewards $\tilde{R}(s_T)=R(s_T)+\varepsilon(s_T)$ where $\varepsilon(s_T)$ is i.i.d. small zero-mean noise with uniform variance $\sigma^2$ across all terminal states, then the expected KL-divergence between the resulting terminal state distributions is bounded by:
    \begin{equation}
        \mathbb{E}[D_{KL}(P_{\tilde{R}}(s_T)\|P_R(s_T))]\leq\frac{\sigma^2}{2}\left( \frac{1}{R_{min}^2}+\frac{|S_T|}{Z_R^2}\right)
    \end{equation}
    where $P_R(s_T)$ and $P_{\tilde{R}}(s_T)$ are the terminal state distribution under true and noisy rewards, respectively. $Z_R=\sum_{s_T}R(s_T)$ is the sum of all rewards.
\end{theorem}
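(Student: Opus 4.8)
The plan is to start from the characterization of the converged sampler. Under the TB objective trained to convergence, the terminal-state distribution is exactly reward-proportional, so $P_R(s_T) = R(s_T)/Z_R$ and $P_{\tilde{R}}(s_T) = \tilde{R}(s_T)/Z_{\tilde{R}}$, where $Z_{\tilde{R}} = Z_R + S_\varepsilon$ with $S_\varepsilon := \sum_{s_T}\varepsilon(s_T)$. Because the noise is assumed small and zero-mean, I would work in the perturbative regime and expand the KL divergence to second order. Enumerating terminal states and writing $p_i = P_R(s_T^{(i)})$, $\tilde{p}_i = P_{\tilde{R}}(s_T^{(i)})$, the Taylor expansion of $D_{KL}(\tilde{p}\|p) = \sum_i \tilde{p}_i \log(\tilde{p}_i/p_i)$ about $\tilde{p}=p$ has a vanishing first-order term since $\sum_i(\tilde{p}_i - p_i)=0$, leaving the Fisher/chi-squared form $D_{KL}(\tilde{p}\|p) \approx \tfrac12 \sum_i (\tilde{p}_i - p_i)^2 / p_i$ up to a cubic-order remainder.

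Next I would compute the first-order perturbation of $\tilde{p}_i$. Expanding $\tilde{p}_i = (R_i + \varepsilon_i)/(Z_R + S_\varepsilon)$ gives $\tilde{p}_i - p_i \approx \varepsilon_i / Z_R - p_i S_\varepsilon / Z_R$, which cleanly separates a \emph{numerator} contribution $A_i := \varepsilon_i/Z_R$ from a \emph{normalization} contribution $B_i := p_i S_\varepsilon/Z_R$. Substituting into the chi-squared form and taking expectations over the i.i.d. noise, I would use $\mathbb{E}[\varepsilon_i^2]=\sigma^2$, $\mathbb{E}[S_\varepsilon^2]=|S_T|\sigma^2$, and the cross-correlation $\mathbb{E}[\varepsilon_i S_\varepsilon]=\sigma^2$. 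This produces three pieces: $\mathbb{E}[\sum_i A_i^2/p_i] = \tfrac{\sigma^2}{Z_R}\sum_i 1/R_i$, the normalization piece $\mathbb{E}[\sum_i B_i^2/p_i] = |S_T|\sigma^2/Z_R^2$, and a cross term $-2\mathbb{E}[\sum_i A_iB_i/p_i]$ which is negative and can be dropped to preserve the upper bound. Finally I would collapse the numerator piece using $R_i \ge R_{min}$ (so $\sum_i 1/R_i \le |S_T|/R_{min}$) together with $Z_R = \sum_i R_i \ge |S_T| R_{min}$ (so $|S_T|/Z_R \le 1/R_{min}$), which bounds $\tfrac{\sigma^2}{Z_R}\sum_i 1/R_i$ by $\sigma^2/R_{min}^2$. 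Combining the two surviving terms with the factor $1/2$ reproduces exactly $\tfrac{\sigma^2}{2}\bigl(1/R_{min}^2 + |S_T|/Z_R^2\bigr)$.

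The main obstacle I anticipate is making the second-order truncation rigorous: the clean additive bound rests on discarding both the cubic Taylor remainder of the KL and the $O(\varepsilon^2)$ corrections in the expansion of $\tilde{p}_i$, which is only legitimate in the stated small-noise regime and must be justified by controlling these remainders uniformly via $R_i \ge R_{min}$ so that no denominator $R_i$ or $Z_R + S_\varepsilon$ degenerates. A secondary subtlety is the correlation between $A_i$ and $B_i$ through the shared $\varepsilon_i$: one must track $\mathbb{E}[\varepsilon_i S_\varepsilon]=\sigma^2$ correctly to confirm the cross term truly carries the favorable negative sign, rather than naively assuming independence. Reward-proportionality at convergence and the i.i.d. zero-mean, uniform-variance structure of the noise are the two structural facts that drive the entire computation.
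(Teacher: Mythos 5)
Your proposal is correct and arrives at exactly the paper's bound, but via a somewhat different decomposition than the paper uses. The paper works directly on $D_{KL}(P_{\tilde R}\|P_R)=\sum_{s_T}P_{\tilde R}(s_T)\bigl[\log\frac{R(s_T)+\varepsilon(s_T)}{R(s_T)}+\log\frac{Z_R}{Z_{\tilde R}}\bigr]$, Taylor-expanding the two logarithms separately and multiplying out term by term; you instead invoke the second-order (chi-squared/Fisher) form $D_{KL}(\tilde p\|p)\approx\frac12\sum_i(\tilde p_i-p_i)^2/p_i$ and perturb the probability vector to first order, splitting $\tilde p_i-p_i$ into the numerator piece $A_i=\varepsilon_i/Z_R$ and the normalization piece $B_i=p_iS_\varepsilon/Z_R$. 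The closing steps coincide: both use $\sum_i 1/R_i\le|S_T|/R_{min}$ and $Z_R\ge|S_T|R_{min}$ to collapse the first term to $\sigma^2/R_{min}^2$. One place where your route is in fact sharper than the paper's: your cross-term bookkeeping via $\mathbb{E}[\varepsilon_iS_\varepsilon]=\sigma^2$ yields the correctly signed second-order value $\frac{\sigma^2}{2Z_R}\sum_i 1/R_i-\frac{|S_T|\sigma^2}{2Z_R^2}$, with the normalization contribution entering \emph{negatively}. The paper, by replacing $Z_{\tilde R}$ with $Z_R$ in the prefactor $P_{\tilde R}(s_T)$ before multiplying against the first-order part of the logarithm, silently drops a term of size $-S_\varepsilon^2/Z_R^2$ and so reports $+\frac{1}{2Z_R^2}\bigl(\sum_{s_T}\varepsilon(s_T)\bigr)^2$ where the exact second-order expansion carries a minus sign; this slip is conservative (it only inflates the estimate), so the theorem's bound survives, and your step of discarding the negative cross term lands on the identical final expression. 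Finally, the rigor gap you flagged -- controlling the cubic Taylor remainders and the $O(\varepsilon^2)$ corrections to $\tilde p_i$ -- is present to exactly the same degree in the paper's own proof, which treats the expansion as an approximation valid for small noise, so this is not a deficiency of your argument relative to the paper's standard.
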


This bound provides several insights for GFlowNet training with noisy rewards. First, the bound scales with $1/R_{min}^2$, suggesting that larger minimum rewards significantly improve robustness to noise. Second, The bound's second term scales with the number of terminal states $|S_T|$, indicating that larger state spaces amplify the impact of noise. Third, The bound's second term decreases with the squared sum of rewards $Z_R^2$, showing that larger overall reward magnitudes can help counteract the state space size effect. Last but not least, the linear scale-up with variance $\sigma^2$ confirms that reducing measurement noise directly improves distribution robustness. The bound suggests that scaling all rewards uniformly (which increases both $R_{min}$ and $Z_R$ proportionally) would improve robustness in practice.

\begin{theorem}[Sample Complexity with Noisy Rewards]
\label{thm:sample_comp_noisy}
    Let $N(\epsilon,\delta,R)$ be the number of samples required to achieve $\epsilon$-accuracy in flow values with probability $1-\delta$ under true rewards $R$, and let $N(\epsilon,\delta,\tilde{R})$ be the corresponding number under rewards with small noise $\tilde{R}(s_T)=R(s_T)+\varepsilon(s_T)$ with variance $\sigma^2$. Then:
    \begin{equation}
        \frac{N(\epsilon,\delta,\tilde{R})}{N(\epsilon,\delta,R)}\leq 1+\frac{CZ^2\sigma^2}{\epsilon^2R^4_{min}}
    \end{equation}
    where $C>0$ is a constant.
\end{theorem}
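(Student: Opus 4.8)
The plan is to reduce the statement to a comparison of the variance proxy that governs the sample complexity in the two settings, and to import the loss-sensitivity estimate already established in Theorem~\ref{thm:obj_err_noise}. First I would recall that the $\epsilon$-accuracy guarantees derived earlier (Theorem~\ref{thm:sample_conv} and Theorem~\ref{thm:traj_len_tradeoff}) all take the canonical concentration form $N(\epsilon,\delta,R)=\Theta\!\big(V(R)\log(1/\delta)/\epsilon^2\big)$, where $V(R)$ is a problem-dependent variance proxy of the per-trajectory TB residual. Writing both complexities in this form is what lets $N(\epsilon,\delta,\tilde R)$ and $N(\epsilon,\delta,R)$ be compared through the single ratio $V(\tilde R)/V(R)$, since the factors $\log(1/\delta)/\epsilon^2$ are identical in the numerator and denominator.

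Second, I would quantify how the reward noise inflates the variance proxy. The per-trajectory TB loss is $\ell(\tau)=(P_F(\tau)Z/R(s_T)-1)^2$, and replacing $R(s_T)$ by $\tilde R(s_T)=R(s_T)+\varepsilon(s_T)$ perturbs $\ell$. A first-order expansion in $\varepsilon$ around $R$ shows that the dominant fluctuation is controlled by $\partial\ell/\partial R\propto Z/R^2$, whose square contributes the factor $Z^2/R_{min}^4$ after bounding $R\ge R_{min}$; taking expectations over the i.i.d.\ zero-mean noise of variance $\sigma^2$ reproduces exactly the excess-loss estimate $\mathbb{E}[\Delta\mathcal{L}]\le (Z^2/R_{min}^4)\,\sigma^2$ of Theorem~\ref{thm:obj_err_noise}. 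Interpreting this excess loss as an additive increase in the variance proxy gives $V(\tilde R)\le V(R)+C'\,Z^2\sigma^2/R_{min}^4$ for a constant $C'$ absorbing the expansion remainder.

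Third, I would assemble the ratio. In the variance-ratio scheme, dividing the two complexities gives
\begin{equation*}
\frac{N(\epsilon,\delta,\tilde R)}{N(\epsilon,\delta,R)}=\frac{V(\tilde R)}{V(R)}\le 1+\frac{C'\,Z^2\sigma^2}{R_{min}^4\,V(R)},
\end{equation*}
and since $V(R)$ is bounded below by a constant under the standing assumptions and $\epsilon\in(0,1)$ forces $\epsilon^{-2}\ge 1$, the factor $V(R)^{-1}$ can be absorbed into $C$ and bounded by $C\epsilon^{-2}$, yielding the stated $1+CZ^2\sigma^2/(\epsilon^2R_{min}^4)$. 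An equivalent additive accounting treats the noise as demanding $N_{\mathrm{extra}}=\Theta\!\big((Z^2\sigma^2/R_{min}^4)\log(1/\delta)/\epsilon^2\big)$ extra trajectories to drive the noise-induced error below the tolerance $\epsilon$, so that $N(\epsilon,\delta,\tilde R)=N(\epsilon,\delta,R)+N_{\mathrm{extra}}$ and dividing by the baseline again produces the claimed correction term. The small-noise regime is used throughout to discard second-order terms in $\varepsilon$, consistent with the ``small noise'' hypothesis.

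The hard part will be the middle step: making rigorous the passage from the \emph{expected excess loss} of Theorem~\ref{thm:obj_err_noise} (a statement about the population objective at a fixed parameter $\theta$) to an increase in the \emph{variance proxy} that actually drives the finite-sample concentration bound. This requires either (i) showing the empirical TB loss concentrates with a variance that decomposes additively into a reward-independent part and the noise-induced part, or (ii) invoking a local strong-convexity argument near the optimum so that an $O(\Delta\mathcal{L})$ inflation of the objective translates into an $O(\Delta\mathcal{L})$ inflation of the samples needed to localize the minimizer to accuracy $\epsilon$. I would also need to exploit the i.i.d.\ zero-mean structure of $\varepsilon(s_T)$ cleanly by controlling its correlation with the sampled trajectory, and verify that the $\epsilon$-dependence is tracked consistently between the baseline and perturbed bounds.
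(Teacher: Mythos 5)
Your reduction is genuinely different from the paper's, and the difference matters. The paper does not compare variance proxies at all: it runs an error-budget argument. It writes the total squared error as optimization error plus noise-induced error, $\epsilon^2 = \epsilon'^2 + \mathbb{E}[\mathcal{L}_{TB}(\mathcal{F}_\theta,\tilde{R})-\mathcal{L}_{TB}(\mathcal{F}_\theta,R)]$, bounds the second term by $Z^2\sigma^2/R_{min}^4$ via Theorem~\ref{thm:obj_err_noise}, concludes that under noisy rewards one must optimize to the tighter effective accuracy $\epsilon' = \epsilon\sqrt{1 - Z^2\sigma^2/(\epsilon^2 R_{min}^4)}$, and then uses $N \propto 1/\epsilon'^2$ (from Theorem~\ref{thm:traj_len_tradeoff}) so that the ratio is $\epsilon^2/\epsilon'^2 \approx \bigl(1 - Z^2\sigma^2/(2\epsilon^2 R_{min}^4)\bigr)^{-2} \approx 1 + Z^2\sigma^2/(\epsilon^2 R_{min}^4)$. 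In that route the $1/\epsilon^2$ in the correction term is not decorative: it records that the noise consumes a fixed slice $Z^2\sigma^2/R_{min}^4$ of the squared accuracy budget, so the bound degenerates (and the implicit admissibility condition $\sigma^2 < \epsilon^2 R_{min}^4/Z^2$ fails) as the target accuracy approaches the noise floor.

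The gap in your proposal is exactly the middle step you flag, and it is conceptual rather than merely technical. Theorem~\ref{thm:obj_err_noise} bounds a \emph{bias}: in this paper's noise model, $\varepsilon(s_T)$ is a single realization per terminal state (cf.\ Theorem~\ref{thm:dis_drift_zero_mean}), i.e., a fixed perturbation of the reward function. Conditional on $\tilde{R}$, the sampled trajectories are no noisier than before; what changes is the objective being minimized, hence \emph{where} the learner converges. An expected increase of the population loss at fixed $\theta$ therefore does not translate into an additive inflation of the variance proxy $V$ that drives concentration, and of your two proposed repairs, (i) is the wrong mechanism under this noise model, while (ii) is the one that would be needed but is not carried out. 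A symptom of the mismatch is your final step: if your premise held, you would have proved an $\epsilon$-independent correction $1 + C'Z^2\sigma^2/(v_0 R_{min}^4)$, and you recover the stated form only by padding with $1 \le \epsilon^{-2}$, which would make the theorem's $\epsilon$-dependence pure slack; in the paper's argument that dependence is the substance of the result. To close the gap along the paper's lines, replace ``noise inflates variance'' with ``noise consumes accuracy budget'': require $\epsilon'^2 + Z^2\sigma^2/R_{min}^4 \le \epsilon^2$ and charge the extra samples to the shrinkage of $\epsilon'$.
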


This theorem provides several key insights for GFlowNet training under noisy rewards. First, the sample complexity increases linearly with noise variance $\sigma^2$. Second, Larger total flows (higher $Z$) amplify the impact of noise, suggesting that normalizing rewards might improve robustness. Besides, The impact of noise decreases quadratically with larger accuracy targets $\epsilon$, meaning that noise is more problematic when high precision is required. Last, The sample complexity increase is inversely proportional to $R_{min}^4$, highlighting the critical importance of ensuring rewards are not too small. These insights provide practical guidance for designing robust GFlowNet training procedures when working with noisy reward estimates, suggesting strategies such as reward scaling, minimum reward thresholding, and adaptive accuracy targets based on noise estimates.

In conclusion, this section highlights the importance of robustness in GFlowNets, particularly in the face of small noise that can arise in practical scenarios. We have demonstrated how noise in rewards or dynamics can propagate through the training process, leading to objective errors and distributional drift, and further examined how these factors impact the sample complexity required for accurate learning.

\section{Limitations and Future Work}\label{sec:limitation}
While our study provides significant theoretical insights into the learning behavior of GFlowNets across dimensions such as convergence, sample complexity, implicit regularization, and robustness, several limitations remain as follows. 
\begin{itemize}
    \item \textbf{Parametric Assumptions}: Most of our analyses assume specific parametrizations (e.g., exponential parameterization for flows) or well-behaved reward structures. In practice, GFlowNets may use diverse parametric forms (particularly neural networks) and encounter more complex reward landscapes. Our results on implicit regularization particularly depend on specific neural network architectures, potentially limiting their applicability to novel architectural choices.
    \item \textbf{Asymptotic Nature of Bounds}: Many of our convergence and sample complexity bounds are asymptotic, characterizing behavior as the number of samples or iterations approaches infinity. These asymptotic guarantees may not fully capture the practical finite-sample behavior that practitioners encounter, especially during early stages of training.
    \item \textbf{Independence Assumptions}: Several proofs rely on assumptions of independent noise or independent sampling of trajectories. In real applications, correlations often exist between rewards of similar states or between consecutive samples in training procedures, potentially affecting the tightness of our bounds.
    \item \textbf{Limited Scope of Objectives}: While we analyzed the three primary GFlowNet objectives (FM, DB, and TB), many hybrid objectives and variants exist in practice. The interactions between these objectives when used in combination remain theoretically underexplored, despite their empirical success.
\end{itemize}
All the aforementioned limitations posit directions in our future work, which as we acknowledge are challenging but meaningful. Some of these unique challenges are:
\begin{itemize}
    \item \textbf{Non-Asymptotic Bounds}: Developing non-asymptotic bounds that characterize finite-sample behavior would provide more practical guidance for GFlowNet implementations. Specifically, understanding how quickly convergence occurs in practice based on problem characteristics would help in setting appropriate iteration counts and convergence criteria.
    \item \textbf{Adaptive Training Procedures}: Building on our noise robustness results, future work should develop theoretically-grounded adaptive training procedures that automatically adjust learning rates, reward scaling, and regularization based on estimated noise levels and reward distributions.
    \item \textbf{Function Approximation Guarantees}: Extending convergence guarantees to more general function approximation settings, particularly for deep neural networks with realistic architectures, would bridge the gap between theory and practice. This includes analyzing the interplay between network depth, width, and GFlowNet performance.
    \item \textbf{Continuous State Spaces}: Though some trials exist~\cite{lahlou2023theory}, extending our theoretical results to continuous state spaces would broaden the applicability of GFlowNets to domains like molecular design, materials discovery, and continuous control. This requires addressing the challenges of infinite state spaces and functional optimization.
\end{itemize}

\section{Conclusion}
In this paper, we presented a systematic theoretical study of Generative Flow Networks (GFlowNets), focusing on key aspects of their learning behavior, including convergence, sample complexity, implicit regularization mechanisms, and robustness against noise. Our findings provide formal guarantees for convergence and sample efficiency, illuminate the natural emergence of implicit structural regularization, and offer actionable insights for optimizing GFlowNet design and training. By bridging critical gaps in the understanding of their theoretical underpinnings, this work lays the foundation for deeper study and broader application of GFlowNets in diverse domains. While challenges remain in scaling and integrating GFlowNets with other paradigms, our contributions open exciting avenues for future research, advancing both theory and practice in this promising field.

\bibliography{reference}
\bibliographystyle{plain}
\clearpage

\appendix
\section{Proofs of Convergence}\label{app:convergence}
\begin{proof}[Proof of Theorem \ref{thm:conv_FM}]
    For SGD updates $\theta_{t+1}=\theta_t-\eta_t g_t$ where $g_t$ is the stochastic gradient at iteration $t$, we have by $l$-smoothness:
    \begin{equation}
        \mathcal{L}_{FM}(\theta_{t+1})\leq\mathcal{L}_{FM}(\theta_t)-\eta_t\langle \nabla_\theta\mathcal{L}_{FM}(\theta_t),g_t\rangle+\frac{l\eta_t^2\|g_t\|^2}{2}
    \end{equation}
    Taking expectations conditioned on $\theta_t$ and using $\mathbb{E}[g_t|\theta_t]=\nabla_\theta\mathcal{L}_{FM}(\theta_t)$:
    \begin{equation}
        \mathbb{E}[\mathcal{L}_{FM}(\theta_{t+1})|\theta_t]\leq\mathcal{L}_{FM}(\theta_t)-\eta_t\|\mathcal{L}_{FM}(\theta_t)\|^2+\frac{l\eta_t^2\mathbb{E}[\|g_t\|^2|\theta_t]}{2}
    \end{equation}
    Using the bounded gradient assumption, $\mathbb{E}[\|g_t\|^2|\theta_t]\leq G^2$,
    \begin{equation}
        \mathbb{E}[\mathcal{L}_{FM}(\theta_{t+1})|\theta_t]\leq\mathcal{L}_{FM}(\theta_t)-\eta_t\|\mathcal{L}_{FM}(\theta_t)\|^2+\frac{l\eta_t^2G^2}{2}
    \end{equation}
    Rearranging to isolate the gradient norm:
    \begin{equation}
        \eta_t\|\mathcal{L}_{FM}(\theta_t)\|^2\leq\mathcal{L}_{FM}(\theta_{t})-\mathbb{E}[\mathcal{L}_{FM}(\theta_{t+1})]+\frac{l\eta_t^2G^2}{2}
    \end{equation}
    Taking full expectation:
    \begin{equation}
        \eta_t\mathbb{E}[\|\mathcal{L}_{FM}(\theta_t)\|^2]\leq\mathbb{E}[\mathcal{L}_{FM}(\theta_{t})]-\mathbb{E}[\mathcal{L}_{FM}(\theta_{t+1})]+\frac{l\eta_t^2G^2}{2}
    \end{equation}
    Summing from $t=1$ to $T$:
    \begin{equation}
        \sum_{t=1}^T\eta_t\mathbb{E}[\|\mathcal{L}_{FM}(\theta_t)\|^2]\leq\mathbb{E}[\mathcal{L}_{FM}(\theta_{1})]-\mathbb{E}[\mathcal{L}_{FM}(\theta_{T+1})]+\frac{lG^2}{2}\sum_{t=1}^T\eta_t^2
    \end{equation}
    Since $\mathcal{L}_{FM}(\theta_{T+1})\geq\mathcal{L}^*_{FM}$ and $\eta_t=\frac{\eta_0}{\sqrt{t}}$, we have:
    \begin{equation}
        \sum_{t=1}^T\eta_t\mathbb{E}[\|\mathcal{L}_{FM}(\theta_t)\|^2]\leq\mathbb{E}[\mathcal{L}_{FM}(\theta_{1})]-\mathcal{L}^*_{FM}+\frac{lG^2\eta_0^2}{2}\sum_{t=1}^T\frac{1}{t}
    \end{equation}
    To derive the minimum gradient norm bound, we observe:
    \begin{equation}
        \min_{t\in\{1,...,T\}}\mathbb{E}[\|\mathcal{L}_{FM}(\theta_t)\|^2]\leq\frac{\sum_{t=1}^T\eta_t\|\mathcal{L}_{FM}(\theta_t)\|^2}{\sum_{t=1}^T \eta_t}
    \end{equation}
    This inequality holds because the minimum cannot exceed the weighted average when all weights $\eta_t$ are positive.

    As we have:
    \begin{itemize}
        \item $\sum_{t=1}^T\frac{1}{t}\leq 1+\ln (T)$
        \item $\sum_{t=1}^T\eta_t=\eta_0\sum_{t=1}^T\frac{1}{\sqrt{t}}\geq\eta_0\int_1^{T+1}\frac{dx}{\sqrt{x}}\geq 2\eta_0(\sqrt{T}-1)$
    \end{itemize}
    Therefore,
    \begin{equation}
        \min_{t\in\{1,...,T\}}\mathbb{E}[\|\mathcal{L}_{FM}(\theta_t)\|^2]\leq\frac{\mathbb{E}[\mathcal{L}_{FM}(\theta_1)]-\mathcal{L}^*_{FM}+\frac{lG^2\eta_0^2}{2}(1+\ln (T))}{2\eta_0(\sqrt{T}-1)}
    \end{equation}
    As $T\rightarrow\infty$, we have $\frac{T}{\sqrt{T}}\rightarrow 0$, this simplifies to:
    \begin{equation}
        \min_{t\in\{1,...,T\}}\mathbb{E}[\|\mathcal{L}_{FM}(\theta_t)\|^2]\leq\frac{\mathbb{E}[\mathcal{L}_{FM}(\theta_1)]-\mathcal{L}^*_{FM}+\frac{lG^2\eta_0^2\ln(T)}{2}}{2\eta_0\sqrt{T}}=\frac{C}{\sqrt{T}}
    \end{equation}
    where $C=\frac{2(\mathcal{L}_{FM}(\theta_1)-\mathcal{L}_{FM}^*)}{\eta_0}+\frac{\eta_0 G^2}{2}$. 
    By Jensen's inequality, since the minimum operation is convex, we have:
    \begin{equation}
        \mathbb{E}[\min_{t\in\{1,...,T\}}\|\mathcal{L}_{FM}(\theta_t)\|^2]\leq \min_{t\in\{1,...,T\}}\mathbb{E}[\|\mathcal{L}_{FM}(\theta_t)\|^2]
    \end{equation}
    This completes the proof.
\end{proof}

\begin{proof}[Proof of Theorem \ref{thm:conv_DB}]
    We first characterize the gradient and its properties.
    For a local transition $(s,s')$, let us define its loss as
    \begin{equation}
        l_\theta(s,s')=\left( \frac{F_\theta(s\rightarrow s')}{F_\theta(s')P_B(s|s')}-1\right)^2
    \end{equation}
    The gradient of this loss w.r.t. $\theta$ is:
    \begin{equation}
        \nabla_\theta l_\theta(s,s')=2\left( \frac{F_\theta(s\rightarrow s')}{F_\theta(s')P_B(s|s')}-1\right)\cdot\nabla_\theta\left( \frac{F_\theta(s\rightarrow s')}{F_\theta(s')P_B(s|s')}\right)
    \end{equation}
    Given the exponential parameterization $F_\theta(s\rightarrow s')=\exp(W_\theta(s,s'))$, we can compute
    \begin{equation}
        \nabla_\theta\left( \frac{F_\theta(s\rightarrow s')}{F_\theta(s')P_B(s|s')}\right)=\frac{F_\theta(s\rightarrow s')}{F_\theta(s')P_B(s|s')}\cdot\left( \nabla_\theta W_\theta(s,s')-\frac{\sum_{s'':(s',s'')\in E}F_\theta(s'\rightarrow s'')\nabla_\theta W_\theta(s',s'')}{F_\theta(s')}\right)
    \end{equation}
    Next, we are going to establish the variance bound. The key challenge in establishing the convergence rate for the DB objective arises from the potentially high variance in the gradients, particularly when $P_B(s|s')$ is small.

    For any sampling distribution $\rho_{DB}$ over transitions, we analyze the second moment of the gradient:
    \begin{equation}
        \mathbb{E}_{(s,s')\sim\rho_{DB}}\left[ \|\nabla_\theta l_\theta(s,s')\|^2\right]
    \end{equation}
    Using our derivation above and applying Cauchy-Schwarz inequality:
    \begin{equation}
        \|\nabla_\theta l_\theta(s,s')\|^2\leq 4\left( \frac{F_\theta(s\rightarrow s')}{F_\theta(s')P_B(s|s')}-1\right)^2\cdot\left( \frac{F_\theta(s\rightarrow s')}{F_\theta(s')P_B(s|s')}\right)^2\cdot M^2
    \end{equation}
    This bound exists due to the $l$-Lipschitz continuity assumption on $W_\theta$.

    Let $R(s,s',\theta)=\frac{F_\theta(s\rightarrow s')}{F_\theta(s')P_B(s|s')}$. We then have:
    \begin{equation}
        \|\nabla_\theta l_\theta(s,s')\|^2\leq(R(s,s',\theta)-1)^2\cdot R(s,s',\theta)^2\cdot M^2
    \end{equation}
    The variance term can grow significantly when $P_B(s|s')$ is small. To formalize this, we define:
    \begin{equation}
        K(\theta)=\sup_{(s,s')\in E}\frac{1}{P_B(s|s')^2}
    \end{equation}
    Then
    \begin{equation}
        \mathbb{E}_{(s,s')\sim\rho_{DB}}\left[ \|\nabla_\theta l_\theta(s,s')\|^2\right]\leq 4M^2\cdot K(\theta)\cdot\mathbb{E}_{(s,s')\sim\rho_{DB}}[(R(s,s',\theta)-1)^2\cdot F_\theta(s\rightarrow s')^2]
    \end{equation}
    To ensure convergence with this high variance, we need a more conservative learning rate schedule.

    Let $g_t=\nabla_\theta l_\theta(s_t,s_t')$ be the stochastic gradient at iteration $t$, where $(s_t,s_t')$ is sampled from $\rho_{DB}$.

    For SGD updates $\theta_{t+1}=\theta_t-\eta_t g_t$ with learning rate $\eta_t=\eta_0/t^{\frac{2}{3}}$, we analyze:
    \begin{equation}
        \mathbb{E}[\mathcal{L}_{DB}(\theta_{t+1})]\leq\mathcal{L}_{DB}(\theta_t)-\eta_t\mathbb{E}[\langle \nabla_\theta\mathcal{L}_{DB}(\theta_t),g_t\rangle]+\frac{l\eta_t^2}{2}\mathbb{E}[\|g_t\|^2]
    \end{equation}
    Since $\mathbb{E}[g_t]=\nabla_\theta\mathcal{L}_{DB}(\theta_t)$, we have
    \begin{equation}
        \mathbb{E}[\mathcal{L}_{DB}(\theta_{t+1})]\leq\mathcal{L}_{DB}(\theta_t)-\eta_t\|\nabla_\theta\mathcal{L}_{DB}(\theta_t)\|^2+\frac{l\eta_t^2}{2}\mathbb{E}[\|g_t\|^2]
    \end{equation}
    From our variance bound, let $\sigma^2_t=\mathbb{E}[\|g_t\|^2]$, which satisfies:
    \begin{equation}
        \sigma^2_t\leq C\cdot K(\theta_t)\cdot\mathcal{L}_{DB}(\theta_t)
    \end{equation}
    for some constant $C>0$.

    Then we derive the convergence rate of DB. Taking full expectation:
    \begin{equation}
        \mathbb{E}[\mathcal{L}_{DB}(\theta_{t+1})]\leq\mathbb{E}[\mathcal{L}_{DB}(\theta_t)]-\eta_t\mathbb{E}[\|\nabla_\theta\mathcal{L}_{DB}(\theta_t)\|^2]+\frac{l\eta_t^2}{2}\mathbb{E}[\|g_t\|^2]
    \end{equation}
    Substituting the variance bound:
    \begin{equation}
        \mathbb{E}[\mathcal{L}_{DB}(\theta_{t+1})]\leq\mathbb{E}[\mathcal{L}_{DB}(\theta_t)]-\eta_t\mathbb{E}[\|\nabla_\theta\mathcal{L}_{DB}(\theta_t)\|^2]+\frac{l\eta_\theta^2 C}{2}\mathbb{E}[K(\theta_t)\cdot\mathcal{L}_{DB}(\theta_t)]
    \end{equation}
    Let us denote $K_\text{max}=\max_{t\leq T}\mathbb{E}[K(\theta_t)]$, which is finite under our assumptions about bounded $P_B(s|s')$. Then,
    \begin{equation}
        \mathbb{E}[\mathcal{L}_{DB}(\theta_{t+1})]\leq\mathbb{E}[\mathcal{L}_{DB}(\theta_t)]-\eta_t\mathbb{E}[\|\nabla_\theta\mathcal{L}_{DB}(\theta_t)\|^2]+\frac{l\eta_\theta^2 CK_\text{max}}{2}\mathbb{E}[\mathcal{L}_{DB}(\theta_t)]
    \end{equation}
    Rearraning:
    \begin{equation}
        \eta_t\mathbb{E}[\|\nabla_\theta\mathcal{L}_{DB}(\theta_t)\|^2]\leq\mathbb{E}[\mathcal{L}_{DB}(\theta_t)]-\mathbb{E}[\mathcal{L}_{DB}(\theta_{t+1})]+\frac{l\eta_t^2 CK_\text{max}}{2}\mathbb{E}[\mathcal{L}_{DB}(\theta_t)]
    \end{equation}
    Summing over $t=1$ to $T$:
    \begin{equation}
        \sum_{t=1}^T \eta_t\mathbb{E}[\|\nabla_\theta\mathcal{L}_{DB}(\theta_t)\|^2]\leq\mathbb{E}[\mathcal{L}_{DB}(\theta_1)]-\mathbb{E}[\mathcal{L}_{DB}(\theta_{T+1})]+\frac{l CK_\text{max}}{2}\sum_{t=1}^T \eta_t^2\mathbb{E}[\mathcal{L}_{DB}(\theta_t)]
    \end{equation}
    Since $\mathcal{L}_{DB}(\theta_{T+1})\geq 0$ and assuming $\mathbb{E}[\mathcal{L}_{DB}(\theta_t)]\leq B$ for all $t$ (bounded loss), we get:
    \begin{equation}
        \sum_{t=1}^T \eta_t\mathbb{E}[\|\nabla_\theta\mathcal{L}_{DB}(\theta_t)\|^2]\leq\mathbb{E}[\mathcal{L}_{DB}(\theta_1)]+\frac{l CK_\text{max}B}{2}\sum_{t=1}^T \eta_t^2
    \end{equation}
    With $\eta_t=\eta_0/t^{\frac{2}{3}}$, we have $\sum_{t=1}^T\eta_t\geq\eta_0\int_1^{T+1}\frac{dx}{x^{2/3}}=3\eta_0(T^{1/3}-1)$ and $\sum_{t=1}^T\eta_t^2\leq\eta_0^2\sum_{t=1}^T\frac{1}{t^{4/3}}\leq\eta_0^2(1+\int_1^T\frac{dx}{x^{4/3}})=\eta_0^2(1+3(1-T^{-1/3}))$. Substituting
    \begin{equation}
        3\eta_0(T^{1/3}-1)\min_{t\in\{1,...,T\}}\mathbb{E}[\|\nabla_\theta\mathcal{L}_{DB}(\theta_t)\|^2]\leq\mathbb{E}[\mathcal{L}_{DB}(\theta_1)]+\frac{3l CK_\text{max}B\eta_0^2}{2}(1+3(1-T^{-1/3}))
    \end{equation}
    Simplifying:
    \begin{equation}
        \min_{t\in\{1,...,T\}}\mathbb{E}[\|\nabla_\theta\mathcal{L}_{DB}(\theta_t)\|^2]\leq\frac{\mathbb{E}[\mathcal{L}_{DB}(\theta_1)]+\frac{3l CK_\text{max}B\eta_0^2}{2}(1+3(1-T^{-1/3}))}{3\eta_0(T^{1/3}-1)}
    \end{equation}
    As $T\rightarrow\infty$, this simplies to:
    \begin{equation}
        \min_{t\in\{1,...,T\}}\mathbb{E}[\|\nabla_\theta\mathcal{L}_{DB}(\theta_t)\|^2]\leq\frac{\mathbb{E}[\mathcal{L}_{DB}(\theta_1)]+\frac{6l CK_\text{max}B\eta_0^2}{2}}{3\eta_0T^{1/3}}=\mathcal{O}(1/T^{1/3})
    \end{equation}
    which establishes the $\mathcal{O}(1/T^{1/3})$ convergence rate for the DB objective.
\end{proof}

\section{Proofs of Sample Complexity}\label{app:sample_complexity}
\begin{proof}[Proof of Theorem \ref{thm:sample_conv}]
    In the proof Theorem \ref{thm:sample_conv}, we first prove a general bound of convergence using Hoeffding's inequality. Then we further present a tighter bound via Bernstein's inequality.

    \textbf{A General Bound}:
    
    We begin by formulating the gradient estimation of the TB objective as an importance sampling problem. The TB objective is defined as:
    \begin{equation}
        \mathcal{L}_{TB}(\theta)=\mathbb{E}_{\tau\sim P_F}\left[\left(\frac{P_F(\tau)Z}{R(s_T)}-1\right)^2\right]
    \end{equation}
    where $P_F$ is the forward sampling distribution induced by the current policy, $Z$ is the partition function, and $R(s_T)$ is the reward at the terminal state of trajectory $\tau$.

    When estimating this objective using a sampling distribution $P_{sample}(\tau)$ that may differ from $P_F(\tau)$, we employ importance sampling:
    \begin{equation}
        \mathcal{L}_{TB}(\theta)=\mathbb{E}_{\tau\sim P_{sample}}\left[ \frac{P_F(\tau)}{P_{sample}(\tau)}\left( \frac{P_F(\tau)Z}{R(s_T)}-1\right)^2\right]
    \end{equation}
    Consequently, the gradient of this objective is:
    \begin{equation}
        \nabla_\theta\mathcal{L}_{TB}(\theta)=\mathbb{E}_{\tau\sim P_{sample}}\left[ \frac{P_{target}(\tau)}{P_{sample}(\tau)}\cdot g_\theta(\tau)\right]
    \end{equation}
    where $g_\theta(\tau)$ represents the per-trajectory gradient contribution, and we have used the fact that $P_{target}(\tau)\propto R(s_T)$ to simplify notations.

    We then analyze the estimator variance. Given $N$ trajectories $\{\tau_1,...,\tau_N\}$ sampled independently from $P_{sample}(\tau)$, we construct an estimator for the gradient:
    \begin{equation}
        \hat{G}_N=\frac{1}{N}\sum_{i=1}^N\frac{P_{target}(\tau_i)}{P_{sample}(\tau_i)}\cdot g_\theta(\tau_i)
    \end{equation}
    By standard results from importance sampling theory, the variance of this estimator is:
    \begin{equation}
        \mathrm{Var}[\hat{G}_N]=\frac{1}{N}\mathrm{Var}_{\tau\sim P_{sample}}\left[\frac{P_{target}(\tau_i)}{P_{sample}(\tau_i)}\cdot g_\theta(\tau_i)\right]
    \end{equation}
    Expanding thie variance, we have:
    \begin{equation}
        \mathrm{Var}[\hat{G}_N]=\frac{1}{N}\left( \mathbb{E}_{\tau\sim P_{sample}}\left[\left( \frac{P_{target}(\tau_i)}{P_{sample}(\tau)}\cdot g_\theta(\tau)\right)^2\right] - \left(\mathbb{E}_{\tau\sim P_{sample}} \left[\frac{P_{target}(\tau)}{P_{sample}(\tau)}\cdot g_\theta(\tau)\right]\right)^2\right)
    \end{equation}
    Let us assume that $\|g_\theta(\tau)\|\leq G$ for all trajectories $\tau$, where $G$ is a positive constant. Then,
    \begin{equation}
        \mathbb{E}_{\tau\sim P_{sample}}\left[\left( \frac{P_{target}(\tau_i)}{P_{sample}(\tau)}\cdot g_\theta(\tau)\right)^2\right]\leq G^2\cdot \mathbb{E}_{\tau\sim P_{sample}}\left[\left( \frac{P_{target}(\tau)}{P_{sample}(\tau)}\right)^2\right]
    \end{equation}
    By definition of the discrepancy measure $D(P_{sample},P_{target})$:
    \begin{equation}
        \frac{P_{target}(\tau)}{P_{sample}(\tau)}\leq D(P_{sample},P_{target})\qquad\forall \tau\in\mathcal{T}
    \end{equation}
    Therefore:
    \begin{equation}
        \mathbb{E}_{\tau\sim P_{sample}}\left[\left( \frac{P_{target}(\tau_i)}{P_{sample}(\tau)}\right)^2\right]\leq D(P_{sample},P_{target})\cdot \mathbb{E}_{\tau\sim P_{sample}}\left[ \frac{P_{target}(\tau)}{P_{sample}(\tau)}\right]
    \end{equation}
    Since $\mathbb{E}_{\tau\sim P_{sample}}\left[ \frac{P_{target}(\tau)}{P_{sample}(\tau)}\right]=1$ (as both $P_{target}$ and $P_{sample}$ are properly normalized), we have
    \begin{equation}
        \mathbb{E}_{\tau\sim P_{sample}}\left[\left( \frac{P_{target}(\tau_i)}{P_{sample}(\tau)}\right)^2\right]\leq D(P_{sample},P_{target})
    \end{equation}
    Consequently,
    \begin{equation}
        \mathrm{Var}[\hat{G}_N]\leq\frac{G^2\cdot D(P_{sample},P_{target})}{N}
    \end{equation}
    To establish the sample complexity, we employ Hoeffding's inequality~\cite{hoeffding1994probability} for bounded random variables. Let's define:
    \begin{equation}
        X_i=\frac{P_{sample}(\tau_i)}{P_{target}(\tau_i)}\cdot g_\theta(\tau_i)
    \end{equation}
    Given that $\|g_\theta(\tau)\|\leq G$ and $\frac{P_{sample}(\tau_i)}{P_{target}(\tau_i)}\leq D(P_{sample},P_{target})$, we have
    \begin{equation}
        \|X_i\|\leq G\cdot D(P_{sample},P_{target})
    \end{equation}
    By Hoeffding's inequality, for any $t>0$:
    \begin{equation}
        P\left( \|\hat{G}_N - \mathbb{E}[\hat{G}_N]\|\geq t \right)\leq 2\exp\left(-\frac{2Nt^2}{(2G\cdot D(P_{sample},P_{target}))^2}\right)
    \end{equation}
    For the estimator to achieve $\epsilon$-accuracy with probability at least $1-\delta$, we require:
    \begin{equation}
        P\left(\|\hat{G}_N - \mathbb{E}[\hat{G}_N]\|\geq\epsilon\right)\leq\delta
    \end{equation}
    Substituting from Hoeffding's inequality:
    \begin{equation}
        2\exp\left( -\frac{2N\epsilon^2}{(2G\cdot D(P_{sample},P_{target}))^2}\right)\leq\delta
    \end{equation}
    Taking logarithmic operation:
    \begin{equation*}
        \ln(2)-\frac{2N\epsilon^2}{(2G\cdot D(P_{sample},P_{target}))^2}\leq\ln(\delta)
    \end{equation*}
    Rearranging to solve for $N$:
    \begin{equation*}
        N\geq\frac{2G^2\cdot D(P_{sample},P_{target})^2(\ln(2)-\ln(\delta))}{\epsilon^2}
    \end{equation*}
    This gives rise to 
    \begin{equation*}
        N\geq\frac{2G^2\cdot D(P_{sample},P_{target})^2\cdot\ln(1/\delta)}{\epsilon^2}
    \end{equation*}
    This simplifies to 
    \begin{equation}
        N=\mathcal{O}\left( \frac{D(P_{sample},P_{target})^2\cdot\log(1/\delta)}{\epsilon^2}\right)
    \end{equation}

    \textbf{A Tighter Bound}:
    Bernstein's inequality provides tighter concentration bounds when we know both the range and variance of the random variables involved~\cite{bernstein1924modification,mhammedi2019pac}. For independent random variables $X_1,X_2,...,X_N$ with $\mathbb{E}[X_i]=0$, $|X_i|leq M$ almost surely, and $\mathrm{Var}(X_i)\leq\sigma^2$, Bernstein's inequality states:
    \begin{equation}
        P\left( \left| \frac{1}{N}\sum_{i=1}^N X_i\right|\geq t\right)\leq 2\exp\left( -\frac{Nt^2/2}{\sigma^2+Mt/3}\right)
    \end{equation}
    Let us apply this to our setting by defining:
    \begin{equation}
        X_i=\frac{P_{target}(\tau_i)}{P_{sample}(\tau_i)}\cdot g_\theta(\tau_i)-\mathbb{E}_{\tau\sim P_{sample}}\left[\frac{P_{target}(\tau_i)}{P_{sample}(\tau_i)}\cdot g_\theta(\tau_i)\right]
    \end{equation}
    These $X_i$ have zero mean by construction.

    We have already known that:
    \begin{itemize}
        \item $|X_i|\leq 2G\cdot D(P_{sample},P_{target})=M$ (the upper bound).
        \item $\mathrm{Var}(X_i)\leq G^2\cdot D(P_{sample},P_{target})=\sigma^2$ (the variance bound).
    \end{itemize}
    Then using Bernstein's inequality:
    \begin{equation}
        P\left( \left\| \hat{G}_N-\mathrm{E}[\hat{G}_N]\right\|\geq t\right)\leq 2\exp\left( -\frac{Nt^2/2}{G^2\cdot D(P_{sample},P_{target}) + G\cdot D(P_{sample},P_{target})\cdot t/3}\right)
    \end{equation}
    For our estimator to achieve $\epsilon$-accuracy with probability at least $1-\delta$, we need:
    \begin{equation}
        P\left( \left\| \hat{G}_N-\mathrm{E}[\hat{G}_N]\right\|\geq \epsilon\right)\leq \delta
    \end{equation}
    Thus we have
    \begin{equation}
        2\exp\left( -\frac{Nt^2/2}{G^2\cdot D(P_{sample},P_{target}) + G\cdot D(P_{sample},P_{target})\cdot \epsilon/3}\right)\leq \delta
    \end{equation}
    Taking logarithms and rearranging:
    \begin{equation}
        N\geq\frac{2(G^2\cdot D(P_{sample},P_{target}) + G\cdot D(P_{sample},P_{target})\cdot \epsilon/3)\cdot\ln(2/\delta)}{\epsilon^2}
    \end{equation}
    For small $\epsilon$, the first term in the numerator dominates:
    \begin{equation}
        N\geq\frac{2G^2\cdot D(P_{sample},P_{target}) \cdot\ln(2/\delta)}{\epsilon^2}\cdot(1+\mathcal{O}(\epsilon))
    \end{equation}
    Ignoring the constants and the lower-order term, we get:
    \begin{equation}
        N=\mathcal{O}\left( \frac{D(P_{sample},P_{target})\cdot\log(1/\delta)}{\epsilon^2}\right)
    \end{equation}
\end{proof}

\begin{proof}[Proof of Proposition \ref{prop:order_convergence}]
    We begin by examining how the parameters $\theta$ of the GFlowNet evolve through training. Starting from initial parameters $\theta_0$, each trajectory $\tau_i$ induces an update:
    \begin{equation*}
        \theta_i = \theta_{i-1}-\eta_{i-1}\nabla_\theta\mathcal{L}(\theta_{i-1};\tau_{i-1})
    \end{equation*}
    where $\eta_{i-1}$ is the learning rate at step $i-1$, and $\nabla_\theta\mathcal{L}(\theta_{i-1};\tau_{i-1})$ is the gradient of the loss function evaluated using trajectory $\tau_{i-1}$ at parameters $\theta_{i-1}$.

    After $N$ trajectories, the final parameters are:
    \begin{equation*}
        \theta_N=\theta_0-\sum_{i=1}^N\eta_i\nabla_\theta\mathcal{L}(\theta_{i-1};\tau_{i-1})
    \end{equation*}

    Let $F_\theta:E\rightarrow\mathbb{R}^+$ denote the flow function parameterized by $\theta$. We want to analyze how the final flow function $F_{\theta_N}$ relates to the flow functions associated with each trajectory.

    First, we decompose the difference between the final flow and the initial flow:
    \begin{equation*}
        F_{\theta_N}-F_{\theta_0}=\sum_{i=1}^N(F_{\theta_i}-F_{\theta_{i-1}})
    \end{equation*}
    Using a first-order Taylor approximation, we can express each incremental change as:
    \begin{equation}\label{eq:flow_change}
        F_{\theta_i}-F_{\theta_{i-1}}\approx\nabla_\theta F_{\theta_{i-1}}\cdot(\theta_i-\theta_{i-1})=-\eta_{i-1}\nabla_\theta F_{\theta_{i-1}}\cdot\nabla_\theta\mathcal{L}(\theta_{i-1};\tau_{i-1})
    \end{equation}
    where $\nabla_\theta F_{\theta_{i-1}}$ is the Jacobian matrix of the flow function with respect to the parameters, evaluated at $\theta_{i-1}$.

    For each trajectory $\tau_i$, let's define $F_{\tau_i}$ as the flow function that minimizes the loss for that specific trajectory:
    \begin{equation}
        F_{\tau_i}=\arg\min_{F}\mathcal{L}(F;\tau_i)
    \end{equation}
    The key insight is that the gradient $\nabla_\theta\mathcal{L}(\theta_{i-1};\tau_{i-1})$ pushes the flow function $F_{\theta_{i -1}}$ toward $F_{\tau_{i-1}}$. We can formalize this as:
    \begin{equation*}
        \nabla_\theta\mathcal{L}(\theta_{i-1};\tau_{i-1})=\lambda_{i-1}\cdot J^\top_{i-1}\cdot(F_{\theta_{i -1}}-F_{\tau_{i-1}})+\epsilon_{i-1}
    \end{equation*}
    where
    \begin{itemize}
        \item $J_{i-1}=\nabla_\theta F_{\theta_{i-1}}$ is the Jacobian.
        \item $\lambda_{i-1}$ is a positive scalar that captures the scale of the gradient.
        \item $\epsilon_{i-1}$ is a residual term that accounts for higher-order effects and the imperfection of the linear approximation.
    \end{itemize}
    Substituting this into our incremental flow change expression (Eq.~\ref{eq:flow_change}):
    \begin{equation}
        F_{\theta_i}-F_{\theta_{i-1}}\approx -\eta_{i-1}\lambda_{i-1}\cdot J_{i-1}J_{i-1}^\top\cdot(F_{\theta_{i-1}}-F_{\tau_{i-1}})-\eta_{i-1}J_{i-1}\cdot\epsilon_{i-1}
    \end{equation}
    The matrix $H_{i-1}=J_{i-1}J_{i-1}^\top$ is positiv semidefinite. For simplicity, let's assume it can be approximated as $H_{i-1}\approx\gamma_{i-1}\cdot I$ for some $\gamma_{i-1}>0$, where $I$ is the identity matrix. This is reasonable if the parameterization distributes influence relatively uniformly. Then,
    \begin{equation}
        F_{\theta_i}-F_{\theta_{i-1}}\approx-\eta_{i-1}\lambda_{i-1}\gamma_{i-1}\cdot(F_{\theta_{i-1}}-F_{\tau_{i-1}})-\eta_{i-1}J_{i-1}\cdot\epsilon_{i-1}
    \end{equation}
    Let $\beta_{i-1}=\eta_{i-1}\lambda_{i-1}\gamma_{i-1}$ and $r_{i-1}=-\eta_{i-1}J_{i-1}\cdot\epsilon_{i-1}$, giving:
    \begin{equation}
        F_{\theta_i}-F_{\theta_{i-1}}\approx-\beta_{i-1}(F_{\theta_{i-1}}-F_{\tau_{i-1}})+r_{i-1}
    \end{equation}
    Rearranging:
    \begin{equation}\label{eq:traj_convex_comb}
        F_{\theta_i}\approx (1-\beta_{i-1})\cdot F_{\theta_{i-1}}+\beta_{i-1}\cdot F_{\tau_{i-1}} + r_{i-1}
    \end{equation}
    This is a convex combination of $F_{\theta_{i-1}}$ and $F_{\tau_{i-1}}$ (plus the residual term), showing that each update pulls the flow partially toward the trajectory-specific optimum.

    By applying Eq.~\ref{eq:traj_convex_comb} recursively and assuming the residual terms are small enough to be neglected, we can express $F_{\theta_N}$ as a weighted combination of the initial flow $F_{\theta_0}$ and the trajectory-specific flows:
    \begin{equation}\label{eq:conv_comb_2}
        F_{\theta_N}\approx\prod_{i=1}^N(1-\beta_{i-1})F_{\theta_0}+\sum_{i=1}^N\beta_{i-1}\prod_{j=i+1}^N(1-\beta_{j-1})F_{\tau_{i-1}}
    \end{equation}
    We further define weights:
    \begin{itemize}
        \item $\omega_0=\prod_{i=1}^N(1-\beta_{i-1})$.
        \item $\omega_i=\beta_{i-1}\prod_{j=i+1}^N(1-\beta_{j-1})$ for $i\in\{1,2,...,N\}$.
    \end{itemize}
    These weights satisfy $\omega_0+\sum_{i=1}^N\omega_i=1$, forming a convex combination.

    Next, let $F^*$ be the optimal flow function that minimizes the global loss. We want to bound $\|F_{\theta_N}-F^*\|_1$, the L1-distance between our learned flow and the optimal flow.

    From the convex combinatrion Eq.~\ref{eq:conv_comb_2}:
    \begin{equation}
        F_{\theta_N}-F^*\approx\omega_0\cdot(F_{\theta_0}-F^*)+\sum_{i=1}^N\omega_i\cdot(F_{\tau_{i-1}}-F^*)
    \end{equation}
    Taking the L1-norm and applying the triangle inequality:
    \begin{equation}
        \|F_{\theta_N}-F^*\|_1\leq\omega_0\cdot\|F_{\theta_0}-F^*\|_1+\sum_{i=1}^N\omega_i\cdot\|F_{\tau_{i-1}}-F^*\|_1
    \end{equation}
    Redefining the indices for clarity (shifting by 1), and setting $\alpha_0=\omega_0$ and $\alpha_i=\omega_i$ for $i\in\{1,2,...,N\}$:
    \begin{equation}
        \|F_{\theta_N}-F^*\|_1\leq\alpha_0\cdot\|F_{\theta_0}-F^*\|_1+\sum_{i=1}^N\alpha_i\cdot\|F_{\tau_{i}}-F^*\|_1
    \end{equation}
    For long training sequences, $\alpha_0$ becomes very small as the influence of the initial flow diminishes. If we define a constant $C$ that absorbs this term and potentially accounts for approximation errors:
    \begin{equation}
        \|F_\theta-F^*\|_1\leq C\cdot\sum_{i=1}^N\alpha_i\cdot\|F_{\tau_i}-F^*\|_1
    \end{equation}
    where $\sum_{i=1}^N\alpha_i=1-\alpha_0\approx 1$ for sufficiently large $N$, and we've replaced $\theta_N$ with $\theta$ to match the original proposition statement.

    This completes the proof, showing that the deviation of the learned flow from the optimal flow is bounded by a weighted sum of the deviations of trajectory-specific flows from the optimal flow, with weights that depend on the order in which trajectories were sampled. 
\end{proof}

\begin{proof}[Proof of Theorem \ref{thm:traj_len_tradeoff}]
    We begin by recalling that the Flow Matching (FM) objective in GFlowNets corresponds to solving the linear system $\mathbf{Af}=\mathbf{b}$, where:
    \begin{itemize}
        \item $\mathbf{f}\in\mathbb{R}^{|E|}$ is the vector of flows on all edges.
        \item $\mathbf{A}\in\{-1,0,1\}^{|S|\times |E|}$ is the incident matrix of the graph.
        \item $\mathbf{b}\in\mathbb{R}^{|S|}$ is a vector with $b_{s_0}=Z$, $b_{s_T}=-R(s_T)$ for all $s_T\in S_T$, and $b_s=0$ otherwise.
    \end{itemize}
    Each trajectory $\tau=\{s_0,s_1,...,s_n\}$ provides information about the flows on its constituent edges. When we sample trajectories, we're effectively obtaining noisy measurements of subsets of the flow vector $\mathbf{f}$.

    For a trajectory $\tau$ of length $n\leq L$, we obtain information about at most $n$ edges and $n+1$ states. Each state along the trajectory (except the initial and terminal states) contributes a flow conservation constraint:
    \begin{equation*}
        \sum_{s':(s',s)\in E} F(s'\rightarrow s) = \sum_{s':(s,s')\in E} F(s\rightarrow s')
    \end{equation*}
    The total number of such constraints is $|S|-1-|S_T|$ (one for each non-initial, non-terminal state).

    We approach this as a problem of learning a linear predictor in $\mathbb{R}^d$, where $d=|E|$ is the dimension of the flow vector. From statistical learning theory, particularly results on linear regression with stochastic observations, the sample complexity to achieve $\epsilon$-uniform accuracy with probability $1-\delta$ is (\cite{bartlett2002rademacher,mossel2014belief} and Section 6.2 of \cite{gyorfi2006distribution}):
    \begin{equation}
        N=\mathcal{O}\left( \frac{d\cdot\log(1/\delta)}{\epsilon^2}\right)
    \end{equation}
    For a GFlowNet, $d=|E|$, which can be bounded based on the graph structure. In a DAG where each state has at most $L$ outgoing edges (bounded by the max trajectory length), where $|E|\leq|S|\cdot L$.

    However, a key difference from standard linear regression is that each trajectory only provides partial information about the system. A trajectory of length $n$ touches at most $n+1$ distinct states, providing information about at most $n$ flow conservation constraints.

    To account for this, we need to analyze how many trajectories are needed to cover the entire state space adequately. This is related to the coupon collector problem: how many samples do we need to observe all (or most) of the states?

    For a uniform sampling of states, the expected number of samples needed to observe all $|S|$ states is approximately $|S|\log(|S|)$. However, states are not sampled uniformly in GFlowNets; they're sampled according to the current policy, which evolves during training.

    We can model this as an importance sampling problem. Let $\pi(s)$ be the probability of state $s$ being in a sample trajectory. Then the number of trajectories needed to be observe a $(1-\alpha)$ fraction of the state space (weighted by importance) with probability $1-\delta'$ is
    \begin{equation}
        N_{cover}=\mathcal{O}\left( \frac{\log(|S|/\delta')}{\min_s\pi(s)}\right)
    \end{equation}
    Since we need accurate flow estimates for the entire state space, we must assume $\min_s\pi(s)>0$ (all states can be visited with positive probability). In the worst case, $\min_s\pi(s)\approx 1/|S|$ (due to Assumption \ref{assump:state_dist}), giving:
    \begin{equation}
        N_{cover}=\mathcal{O}(|S|\log(|S|/\delta'))
    \end{equation}

    Upon Assumption \ref{assump:info_cont}, given that each trajectory provides information about at most $L$ edges, the effective dimensionality of the information per trajectory is $\mathcal{O}(L)$ rather than $\mathcal{O}(|E|)$. Therefore, the sample complexity to achieve $\epsilon$-accurate flow estimation on the edges that are covered is:
    \begin{equation}
        N_{accurate}=\mathcal{O}\left( \frac{|E|\log(1/\delta'')}{\epsilon^2L}\right)=\mathcal{O}\left( \frac{|S|L\log(1/\delta'')}{\epsilon^2L}\right)=\mathcal{O}\left( \frac{|S|\log(1/\delta'')}{\epsilon^2}\right)
    \end{equation}
    Combining the coverage and accuracy requirements (setting $\delta'=\delta''=\delta/2$):
    \begin{equation}
        N=\max(N_{cover},N_{accurate})=\mathcal{O}\left( \max\left( |S|\log(|S|/\delta), \frac{|S|\log(2/\delta)}{\epsilon^2}\right)\right)
    \end{equation}
    For small $\epsilon$, the accuracy term dominates:
    \begin{equation*}
        N=\mathcal{O}\left( \frac{|S|\log(|S|/\delta)}{\epsilon^2}\right)
    \end{equation*}
    where the $\log(2/\delta)$ is replaced with $\log(|S|/\delta)$ since they differ only by a constant factor when $|S|$ is reasonably large.
    The analysis so far doesn't fully account for the impact of trajectory length on learning dynamics. Longer trajectories introduce additional challenges:
    \begin{enumerate}
        \item \textbf{Error accumulation}: Errors in flow estimates can compound along a trajectory.
        \item \textbf{Credit assignment}: It becomes harder to attribute reward to specific transitions.
        \item \textbf{Exploration efficiency}: Longer trajectories may explore fewer unique states per unit of computation.
    \end{enumerate}
    To model error accumulation, we can use results from Markov chain analysis. According to Assumption \ref{assump:err_ind}, for a trajectory of length $L$, the error in the flow estimate can grow as $\mathcal{O}(\sqrt{L})$ in the worst case due to the additive nature of independent errors along the trajectory.

    To maintain $\epsilon$ accuracy, we need to reduce the per-step error to $\mathcal{O}(\epsilon/\sqrt{L})$ due to Assumption \ref{assump:err_prop}, which increases the sample complexity by a factor of $L$:
    \begin{equation}
        N=\mathcal{O}\left( \frac{|S|L\log(|S|/\delta)}{\epsilon^2}\right)
    \end{equation}

    This completes the proof of Theorem \ref{thm:traj_len_tradeoff}.
\end{proof}

\begin{proof}[Proof of Proposition \ref{prop:err_accu}]
    For a trajectory $\tau=(s_0,s_1,...,s_L)$ where $s_L\in S_T$ is a terminal state, the flow along this trajectory can be expressed as:
    \begin{equation*}
        F(\tau)=\prod_{i=0}^{L-1}P_F(s_{i+1}|s_i)\cdot Z
    \end{equation*}
    For the estimated flow $F_\theta$ and the optimal flow $F^*$ we have:
    \begin{equation}
        F_\theta(\tau)=\prod_{i=0}^{L-1}P_{F_\theta}(s_{i+1}|s_i)\cdot Z_\theta,\quad F^*(\tau)=\prod_{i=0}^{L-1}P_{F^*}(s_{i+1}|s_i)\cdot Z^*
    \end{equation}
    where $Z_\theta=F_\theta(s_0)$ and $Z^*=F^*(s_0)$.

    We begin by analyzing the error for a single-step trajectory $\tau=(s_0,s_1)$:
    \begin{equation}\label{eq:err_prop_base_case}
        |F_\theta(\tau)-F^*(\tau)|^2=|F_\theta(s_0\rightarrow s_1)-F^*(s_0\rightarrow s_1)|^2
    \end{equation}
    This establishes our base case perfectly, as the error equals exactly one term in our sum.

    For the inductive step, we use the following inequality:
    \begin{equation}\label{eq:inequality_1}
        |ab-cd|^2\leq 2|(a-c)b|^2+2|c(b-d)|^2=2|a-c|^2|b|^2+2|c|^2|b-d|^2
    \end{equation}
    Let's define the partial trajectory flow up to step $l$:
    \begin{equation}
        F_\theta(\tau_{0:l})=\prod_{i=0}^{l-1}P_{F_\theta}(s_{i+1}|s_i)Z_\theta, \quad F^*(\tau_{0:l})=\prod_{i=0}^{l-1}P_{F^*}(s_{i+1}|s_i)Z^*
    \end{equation}
    We'll prove by induction that:
    \begin{equation}\label{eq:hypo_1}
        |F_\theta(\tau_{0:l})-F^*(\tau_{0:l})|^2\leq C_l\sum_{i=0}^{l-1}|F_\theta(s_i\rightarrow s_{i+1})-F^*(s_i\rightarrow s_{i+1})|^2
    \end{equation}
    where $C_l$ grows as $(1+\gamma)^l$ for some $\gamma>0$. As the base case holds in Eq.~\ref{eq:err_prop_base_case}, we assume the inequality holds for some $l\geq 1$, and prove for the case $l+1$.

    For a trajectory of length $l+1$, we have:
    \begin{equation*}
        F_\theta(\tau_{0:l+1})=F_\theta(\tau_{0:l})P_{F_\theta}(s_{l+1}|s),\quad F^*(\tau_{0:l+1})=F^*(\tau_{0:l})P_{F^*}(s_{l+1}|s)
    \end{equation*}
    Using Inequality \ref{eq:inequality_1}, we can derive:
    \begin{equation}\label{eq:inequality_2}
    \begin{split}
        |F_\theta(\tau_{0:l+1})-F^*(\tau_{0:l+1})|^2\leq & 2|F_\theta(\tau_{0:l})-F^*(\tau_{0:l})|^2\cdot |P_{F_\theta}(s_{l+1}|s_l)|^2 \\
        &+ 2|F^*(\tau_{0:l})|^2\cdot|P_{F_\theta}(s_{l+1}|s_l)-P_{F^*}(s_{l+1}|s_l)|^2
    \end{split}
    \end{equation}
    Now we need to bound $|P_{F_\theta}(s_{l+1}|s_l)-P_{F^*}(s_{l+1}|s_l)|^2$ in terms of the edge flow errors.
    \begin{equation}
    \begin{split}
        P_{F_\theta}(s_{l+1}|s_l)&-P_{F^*}(s_{l+1}|s_l)=\frac{F_\theta(s_l\rightarrow s_{l+1})}{F_\theta(s_l)}-\frac{F^*(s_l\rightarrow s_{l+1})}{F^*(s_l)}\\
        &=\frac{F^*(s_l)(F_\theta(s_l\rightarrow s_{l+1})-F^*(s_l\rightarrow s_{l+1}))+F^*(s_{l}\rightarrow s_{l+1})(F^*(s_l)-F_\theta(s_l))}{F_\theta(s_l)F^*(s_l)}
    \end{split}    
    \end{equation}
    Using the triangle inequality:
    \begin{equation}
        |P_{F_\theta}(s_{l+1}|s_l)-P_{F^*}(s_{l+1}|s_l)|\leq \frac{|F^*(s_l)||(F_\theta(s_l\rightarrow s_{l+1})-F^*(s_l\rightarrow s_{l+1}))|+|F^*(s_{l}\rightarrow s_{l+1})||(F^*(s_l)-F_\theta(s_l))|}{|F_\theta(s_l)||F^*(s_l)|}
    \end{equation}

    By assumptions, $F_\theta(s)\geq m$ and $F^*(s)\geq m$ for all $s$, and $F^*(s_l\rightarrow s_{l+1})\leq M$. Therefore,
    \begin{equation}
        |P_{F_\theta}(s_{l+1}|s_l)-P_{F^*}(s_{l+1}|s_l)|\leq\frac{1}{m}|F_\theta(s_l\rightarrow s_{l+1})-F^*(s_l\rightarrow s_{l+1})|+\frac{M}{m^2}|F^*(s_l)-F_\theta(s_l)|
    \end{equation}
    Squaring both sides and using $(a+b)^2\leq 2a^2+2b^2$:
    \begin{equation}
        |P_{F_\theta}(s_{l+1}|s_l)-P_{F^*}(s_{l+1}|s_l)|^2\leq\frac{2}{m^2}|F_\theta(s_l\rightarrow s_{l+1})-F^*(s_l\rightarrow s_{l+1})|^2+\frac{2M^2}{m^4}|F^*(s_l)-F_\theta(s_l)|^2
    \end{equation}
    Now, substituting this bound back into Eq.~\ref{eq:inequality_2}:
    \begin{equation}\label{eq:inequality_3}
    \begin{split}
        |F_\theta(\tau_{0:l+1})-F^*(\tau_{0:l+1})|^2\leq & 2|F_\theta(\tau_{0:l})-F^*(\tau_{0:l})|^2 \\
        &+ 2D\left( \frac{2}{m^2}|F_\theta(s_l\rightarrow s_{l+1})-F^*(s_l\rightarrow s_{l+1})|^2+\frac{2M^2}{m^4}|F^*(s_l)-F_\theta(s_l)|^2\right)
    \end{split}
    \end{equation}
    considering $|P_{F_\theta}(s_{l+1}|s_l)|\leq 1$ (as it's a probability) and $|F^*(\tau_{0:l})|^2$ is bounded by some constant $D$ since the flows are bounded.

    Now using the inductive hypothesis Eq.~\ref{eq:hypo_1} and noting that $|F_\theta(s_l)-F^*(s_l)|$ can be bounded in terms of the sum of incoming edge flow errors (due to flow conservation), we get:
    \begin{equation}
    \begin{split}
        |F_\theta(\tau_{0:l+1})-F^*(\tau_{0:l+1})|^2\leq&  2C_l\cdot\sum_{i=0}^{l-1}|F_\theta(s_i\rightarrow s_{i+1})-F^*(s_i\rightarrow s_{i+1})|^2\\
        &+\frac{4D}{m^2}|F_\theta(s_l\rightarrow s_{l+1})-F^*(s_l\rightarrow s_{l+1})|^2\\
        &+E\cdot\sum_{i=0}^{l-1}|F_\theta(s_i\rightarrow s_{i+1})-F^*(s_i\rightarrow s_{i+1})|^2
    \end{split}
    \end{equation}
    where $E$ is a constant derived from the flow conservation constraints. Simplifying:
    \begin{equation}
    \begin{split}
        |F_\theta(\tau_{0:l+1})-F^*(\tau_{0:l+1})|^2\leq &(2C_l+E)\cdot\sum_{i=0}^{l-1}|F_\theta(s_i\rightarrow s_{i+1})-F^*(s_i\rightarrow s_{i+1})|^2\\
        &+\frac{4D}{m^2}|F_\theta(s_l\rightarrow s_{l+1})-F^*(s_l\rightarrow s_{l+1})|^2
    \end{split}
    \end{equation}
    Let $C_{l+1}=\max(2C_l+E,\frac{4D}{m^2})$, then:
    \begin{equation}
        |F_\theta(\tau_{0:l+1})-F^*(\tau_{0:l+1})|^2\leq C_{l+1}\sum_{i=0}^{l}|F_\theta(s_i\rightarrow s_{i+1})-F^*(s_i\rightarrow s_{i+1})|^2
    \end{equation}
    From our recurrence relation $C_{l+1}=\max(2C_l+E,\frac{4D}{m^2})$, for sufficiently large $l$, we have $C_{l+1}=2C_l+E$ as $C_l$ grows with $l$. This gives rise to $C_l=\mathcal{O}((2+\epsilon)^l)$ for some $\epsilon>0$, which can further be written as $C_l=\mathcal{O}((1+\gamma)^l)$ for $\gamma=1+\epsilon$.

    Finally, taking the expectation on both sides:
    \begin{equation}
        \mathbb{E}[|F_\theta(\tau)-F^*(\tau)|^2]\leq C\cdot(1+\gamma)^l\cdot\sum_{i=0}^{l-1}\mathbb{E}[|F_\theta(s_i\rightarrow s_{i+1})-F^*(s_i\rightarrow s_{i+1})|^2]
    \end{equation}
    This completes the proof.
\end{proof}

\section{Proofs of Implicit Regularization}\label{app:imp_reg}
\begin{proof}[Proof of Theorem \ref{thm:FM_implicit_reg}]
    First, let's formulate the Flow Matching objective as a constrained optimization problem. We want to find a flow function $F:E\rightarrow\mathbb{R}^+$ that satisfies flow conservation constraints at each non-terminal, non-initial state.

    Formally, for each state $s\in S\setminus(\{s_0\}\cup S_T)$:
    \begin{equation*}
        \sum_{s':(s',s)\in E}F(s'\rightarrow s)=\sum_{s':(s,s')\in E}F(s\rightarrow s')
    \end{equation*}
    Additionally, we have constraints at the source and terminal states:
    \begin{equation*}
        \sum_{s':(s_0,s')\in E}F(s_0\rightarrow s')=Z,\quad \sum_{s':(s',s_T)\in E}F(s'\rightarrow s_T)=R(s_T)\;\;\forall s_T\in S_T
    \end{equation*}
    where $Z=\sum_{s_T\in S_T}R(s_T)$ is the partition function.

    Now, consider the problem of finding a flow function that satisfies these constraints while maximizing the entropy of the flow distribution. The entropy of a flow function $F$ can be defined as:
    \begin{equation}
        \mathbb{H}(F)=-\sum_{(s,s')\in E}\frac{F(s\rightarrow s')}{Z}\log\frac{F(s\rightarrow s')}{Z}
    \end{equation}
    Maximizing this entropy subject to the flow constraints leads to a more uniform distribution of flow across the network.

    To solve this constrained optimization problem, we introduce Lagrange multipliers. For each state $s\in S$, let $\lambda_s$ be the Lagrange multiplier associated with the flow conservation constraint at state $s$. The Lagrangian for this constrained optimization problem is:
    \begin{equation}
        \mathcal{L}(F,\lambda)=-\sum_{(s,s')\in E}\frac{F(s\rightarrow s')}{Z}\log\frac{F(s\rightarrow s')}{Z}+\sum_{s\in S}\lambda_s\left( \sum_{s':(s',s)\in E}F(s'\rightarrow s)-\sum_{s':(s,s')\in E}F(s\rightarrow s')\right)
    \end{equation}
    To find the maximum entropy flow distribution satisfying the flow consistency constraints, we set the gradient of the Lagrangian with respect to each flow variable to zero:
    \begin{equation*}
        \frac{\partial\mathcal{L}}{\partial F(s\rightarrow s')}=-\frac{1}{Z}\left(\log\frac{F(s\rightarrow s')}{Z}+1\right)-\lambda_s+\lambda_{s'}=0
    \end{equation*}
    Rearranging and taking the exponential of both sides:
    \begin{equation*}
        \frac{F(s\rightarrow s')}{Z}=\exp(-1-Z(\lambda_s-\lambda_{s'}))
    \end{equation*}
    Define $\phi_s=-Z\lambda_s-\frac{1}{2}$ for each state $s$. Then:
    \begin{equation*}
        F(s\rightarrow s')=Z\cdot\exp\left( \frac{\phi_{s'}-\phi_s-1}{Z}\right)
    \end{equation*}
    For simplicity, let's absorb the constant term and define $W(s,s')=\phi_{s'}-\phi_s$, then:
    \begin{equation}
        F(s\rightarrow s')=C\cdot \exp(W(s,s'))
    \end{equation}
    In GFlowNet training with the FM objective, we parameterize the flow function as:
    \begin{equation}
        F(s\rightarrow s')=\exp(W_\theta(s,s'))
    \end{equation}
    This parameterization exactly matches the form of the maximum entropy solution derived above, where $W_\theta(s,s')$ plays the role of the optimized potential difference $W(s,s')$.

    Therefore, when GFlowNets are trained with the FM objective using this exponential parameterization, they implicitly seek the maximum entropy flow distribution that satisfies the flow constraints.
\end{proof}

\begin{proof}[Proof of Proposition \ref{prop:impli_reg_DB}]
    Recall the DB objective (Eq.~\ref{eq:obj_db}). For the ease of proof, we rewrite it as:
    \begin{equation}
        \mathcal{L}_{\text{DB}}(\theta) = \mathbb{E}_{(s,s') \sim \rho_{DB}} \left[ \left( \frac{F_\theta(s \rightarrow s')}{F_\theta(s')P_B(s|s')} - 1 \right)^2 \right]
    \end{equation}
    In a properly trained GFlowNet, the flow function satisfies flow conservation at each non-terminal, non-initial state:
    \begin{equation*}
        \sum_{s':(s',s)\in E}F_\theta(s'\rightarrow s)=\sum_{s':(s,s')\in E}F_\theta(s\rightarrow s')=F_\theta(s)
    \end{equation*}
    Let's define the normalized state distribution induced by the flow function:
    \begin{equation*}
        \pi(s)=\frac{F_\theta(s)}{Z}
    \end{equation*}
    Due to flow conservation, this distribution is consistent whether computed from incoming or outgoing flows for interior states.

    The forward transition probability from state $s$ to $s'$ is defined as:
    \begin{equation*}
        P_F(s'|s)=\frac{F_\theta(s\rightarrow s')}{F_\theta(s)}
    \end{equation*}
    The backward transition probability $P_B(s|s')$ s typically provided as part of the GFlowNet design or learned separately.

    Using these definitions, we can rewrite the detailed balance condition as:
    \begin{equation*}
        \frac{P_F(s'|s)F_\theta(s)}{F_\theta(s')P_B(s|s')}=1
    \end{equation*}
    Let's define joint state-transition distributions for the forward and backward processes:
    \begin{equation}
        J_F(s,s')=P_F(s'|s)\cdot \pi(s),\quad J_B(s,s')=P_B(s|s')\cdot\pi(s')
    \end{equation}
    When detailed balance is satisfied, we have:
    \begin{equation}
        P_F(s'|s)\cdot \pi(s)=P_B(s|s')\cdot\pi(s')
    \end{equation}
    implying $J_F(s,s')=J_B(s,s')$ for all valid transitions. Now, we can reformulate the DB objective using the forward transition probability:
    \begin{equation}
        \mathcal{L}_{DB}(\theta)=\mathbb{E}_{(s,s')\sim\rho_{DB}}\left[\left(\frac{P_F(s'|s)\cdot\pi(s)}{\pi(s')\cdot P_B(s|s')}-1\right)^2\right]
    \end{equation}
    If we define the ratio $r(s,s')=\frac{P_F(s'|s)\cdot\pi(s)}{\pi(s')\cdot P_B(s|s')}=\frac{J_F(s,s')}{J_B(s,s')}$, then the DB objective becomes:
    \begin{equation}
        \mathcal{L}_{DB}(\theta)=\mathbb{E}_{(s,s')\sim\rho_{DB}}[(r(s,s')-1)^2]
    \end{equation}
    To establish the connection with KL-divergence, we need to make a specific choice for the sampling distribution $\rho_{DB}$. The most natural choice is $\rho_{DB}=J_B$, which means sampling transitions according to the backward process. With this choice:
    \begin{equation*}
        \mathcal{L}_{DB}=\sum_{s,s'}J_B(s,s')\cdot(r(s,s')-1)^2
    \end{equation*}
    To establish the relationship between the DB objective and KL-divergence, we'll use the Bregman divergence framework, which provides a rigorous foundation for various divergence measures.

    For two discrete probability distribution $p$ and $q$, the Bregman divergence generated by the negative entropy function $f(p)=\sum_i p_i\log p_i$ is:
    \begin{equation*}
        D_f(p,q)=D_{KL}(p\|q)
    \end{equation*}
    For Bregman divergences, when $p$ is close to $q$, a second-order Taylor approximation yields:
    \begin{equation*}
        D_f(p,q)\approx\frac{1}{2}(p-q)^\top H_F(q)(p-q)
    \end{equation*}
    where $H_f(q)$ is the Hessian of $f$ evaluated at $q$.

    For the negative entropy function, the Hessian is a diagonal matrix $[H_f(q)]_{ij}=\delta_{ij}/q_i$, where $\delta_{ij}$ is the Kronecker delta.

    Substituting this into the second-order approximation:
    \begin{equation}
        D_{KL}(p\|q)\approx\frac{1}{2}\sum_i\frac{(p_i-q_i)^2}{q_i}
    \end{equation}
    Applying this approximation to our joint distributions $J_F$ and $J_B$:
    \begin{equation}
        D_{KL}(J_F\|J_B)\approx\frac{1}{2}\sum_{s,s'}\frac{(J_F(s,s')-J_B(s,s'))^2}{J_B(s,s')}
    \end{equation}
    Using the definition of $r(s,s')$, we get $J_F(s,s')-J_B(s,s')=J_B(s,s')(r(s,s')-1)$. Substituting into the KL-divergence approximation:
    \begin{equation}
        D_{KL}(J_F\|J_B)\approx\frac{1}{2}\sum_{s,s'}\frac{J_B(s,s')^2(r(s,s')-1)^2}{J_B(s,s')}=\frac{1}{2}\sum_{s,s'}J_B(s,s')(r(s,s')-1)^2=\frac{1}{2}\mathcal{L}_{DB}(\theta)
    \end{equation}
    Therefore, when training a GFlowNet with the DB objective, we are implicitly solving:
    \begin{equation}
        \min_\theta\mathcal{L}_{DB}(\theta)\approx\min_\theta 2\cdot D_{KL}(J_F\|J_B)
    \end{equation}
    This completes the proof.
\end{proof}

\section{Proofs of Robustness}
\begin{proof}[Proof of Theorem \ref{thm:obj_err_noise}]
    The standard TB objective is in Eq.~\ref{eq:obj_tb}. When we replace $R(s_T)$ with $\tilde{R}(s_T)=R(s_T)+\varepsilon(s_T)$, the noisy TB objective becomes:
    \begin{equation}
        \mathcal{L}_{TB}(\mathcal{F}_\theta,\tilde{R})=\mathbb{E}_{\tau\sim P_F}\left[ \left( \frac{P_F(\tau)Z}{R(s_T)+\varepsilon(s_T)}-1\right)^2\right]
    \end{equation}
    To analyze the difference, let's define:
    \begin{equation}
        g(r)=\left( \frac{P_F(\tau)Z}{r}-1\right)^2
    \end{equation}
    We are interested in $g(R(s_T)+\varepsilon(s_T))-g(R(s_T))$. Using Taylor expansion around $R(s_T)$:
    \begin{equation}
        g(R(s_T)+\varepsilon(s_T))\approx g(R(s_T)) +g'(R(s_T))\varepsilon(s_T)+\frac{1}{2}g''(R(s_T))\varepsilon(s_T)^2+\mathcal{O}(\varepsilon(s_T)^2)
    \end{equation}
    Computing the derivatives:
    \begin{equation*}
        g'(r)=-2\left( \frac{P_F(\tau)Z}{r}-1\right)\frac{P_F(\tau)Z}{r^2}
    \end{equation*}
    \begin{equation*}
        g''(r)=2\frac{P_F(\tau)^2Z^2}{r^4}+4 \left( \frac{P_F(\tau)Z}{r}-1\right)\frac{P_F(\tau)Z}{r^3}
    \end{equation*}
    When $\mathcal{F}_\theta$ is well-trained on the true rewards, we have $P_F(\tau)Z\approx R(s_T)$ for most trajectories. Under this condition, $g'(R(s_T))\approx 0$ and $g''(R(s_T))\approx 2\frac{P_F(\tau)^2Z^2}{R(s_T)^4}$. Therefore,
    \begin{equation}
        g(R(s_T)+\varepsilon(s_T))-g(R(s_T))\approx\frac{P_F(\tau)^2Z^2}{R(s_T)^4}\varepsilon(s_T)^2+\mathcal{O}(\varepsilon(s_T)^3)
    \end{equation}
    Taking expectations with respect to both trajectories and noise:
    \begin{equation}
        \mathbb{E}[\mathcal{L}_{TB}(\mathcal{F}_\theta,\tilde{R})-\mathcal{L}_{TB}(\mathcal{F}_\theta,R)]\approx\mathbb{E}_{\tau\sim P_F}\left[\frac{P_F(\tau)^2Z^2}{R(s_T)^4}\varepsilon(s_T)^2\right]
    \end{equation}
    Applying the reward lower bound and using $P_F(\tau)^2\leq 1$:
    \begin{equation}
        \mathbb{E}_{\tau\sim P_F}\left[\frac{P_F(\tau)^2Z^2}{R(s_T)^4}\varepsilon(s_T)^2\right]\leq\frac{Z^2}{R_{min}^4}\mathbb{E}_{\tau\sim P_F}\left[ P_F(\tau)^2\varepsilon(s_T)^2\right]\leq\frac{Z^2}{R_{min}^4}\mathbb{E}_{\tau\sim P_F}\left[ \varepsilon(s_T)^2\right]
    \end{equation}
    Using the assumption that the noise is only dependent on the terminal state $s_T$, this leads to:
    \begin{equation}
        \mathbb{E}[\mathcal{L}_{TB}(\mathcal{F}_\theta,\tilde{R})-\mathcal{L}_{TB}(\mathcal{F}_\theta,R)]\leq \frac{Z^2}{R_{min}^4}\cdot\mathbb{E}\left[\varepsilon(s_T)^2\right]
    \end{equation}
    The proof is complete.
\end{proof}

\begin{proof}[Proof of Theorem \ref{thm:dis_drift_zero_mean}]
    Once a GFlowNet is trained to convergence with TB objective, the probabilities without and with noise are:
    \begin{equation*}
        P_R(s_T)=\frac{R(s_T)}{Z_R},\quad P_{\tilde{R}}(s_T)=\frac{R(s_T)+\varepsilon(s_T)}{Z_{\tilde{R}}}
    \end{equation*}
    where $Z_{\tilde{R}}=\sum_{s'_T}\tilde{R}(s'_T)=Z_R+\sum_{s'_T}\varepsilon(s'_T)$ is the sum of all noisy rewards.

    The KL-divergence between these distributions is:
    \begin{equation*}
        D_{KL}(P_{\tilde{R}}(s_T)\|P_R(s_T))=\sum_{s_T}P_{\tilde{R}}(s_T)\log\frac{P_{\tilde{R}}(s_T)}{P_R(s_T)}
    \end{equation*}
    Substituting the distributions:
    \begin{equation*}
        D_{KL}(P_{\tilde{R}}(s_T)\|P_R(s_T))=\sum_{s_T}\frac{R(s_T)+\varepsilon(s_T)}{Z_{\tilde{R}}}\log\frac{(R(s_T)+\varepsilon(s_T))Z_R}{R(s_T)Z_{\tilde{R}}}
    \end{equation*}
    This can be rewritten as:
    \begin{equation}
        D_{KL}(P_{\tilde{R}}(s_T)\|P_R(s_T))=\sum_{s_T}\frac{R(s_T)+\varepsilon(s_T)}{Z_{\tilde{R}}}\log\frac{R(s_T)+\varepsilon(s_T)}{R(s_T)}+\sum_{s_T}\frac{R(s_T)+\varepsilon(s_T)}{Z_{\tilde{R}}}\log\frac{Z_R}{Z_{\tilde{R}}}
    \end{equation}
    The second sum simplified:
    \begin{equation*}
        \sum_{s_T}\frac{R(s_T)+\varepsilon(s_T)}{Z_{\tilde{R}}}\log\frac{Z_R}{Z_{\tilde{R}}}=\log\frac{Z_R}{Z_{\tilde{R}}}\sum_{s_T}\frac{R(s_T)+\varepsilon(s_T)}{Z_{\tilde{R}}}=\log\frac{Z_R}{Z_{\tilde{R}}}
    \end{equation*}
    Thus:
    \begin{equation}\label{eq:kl_div_expand}
        D_{KL}(P_{\tilde{R}}(s_T)\|P_R(s_T))=\sum_{s_T}\frac{R(s_T)+\varepsilon(s_T)}{Z_{\tilde{R}}}\log\frac{R(s_T)+\varepsilon(s_T)}{R(s_T)}+\log\frac{Z_R}{Z_{\tilde{R}}}
    \end{equation}
    For the first logarithmic term, using Taylor's expansion with noise $\varepsilon(s_T)$ is small:
    \begin{equation}
        \log\frac{R(s_T)+\varepsilon(s_T)}{R(s_T)}=\log\left( 1+\frac{\varepsilon(s_T)}{R(s_T)}\right)\approx\frac{\varepsilon(s_T)}{R(s_T)}-\frac{1}{2}\left( \frac{\varepsilon(s_T)}{R(s_T)}\right)^2+\mathcal{O}\left( \left(\frac{\varepsilon(s_T)}{R(s_T)}\right)^3\right)
    \end{equation}
    For the second logarithmic term:
    \begin{equation}
        \log\frac{Z_R}{Z_{\tilde{R}}}\approx -\frac{\sum_{s'_T}\varepsilon(s'_T)}{Z_R}+\frac{1}{2}\left( \frac{\sum_{s'_T}\varepsilon(s'_T)}{Z_R}\right)^2+\mathcal{O}\left( \left( \frac{\sum_{s'_T}\varepsilon(s'_T)}{Z_R}\right)^3\right)
    \end{equation}
    Substituting these into the KL-divergence Eq.~\ref{eq:kl_div_expand}:
    \begin{equation}
        D_{KL}(P_{\tilde{R}}(s_T)\|P_R(s_T))\approx\sum_{s_T}\frac{R(s_T)+\varepsilon(s_T)}{Z_{\tilde{R}}}\left( \frac{\varepsilon(s_T)}{R(s_T)}-\frac{\varepsilon(s_T)^2}{2R(s_T)^2}\right)-\frac{\sum_{s'_T}\varepsilon(s'_T)}{Z_R}+\frac{1}{2}\left( \frac{\sum_{s'_T}\varepsilon(s'_T)}{Z_R}\right)^2
    \end{equation}
    For small noise, $Z_{\tilde{R}}\approx Z_R$, allowing us to approximate:
    \begin{equation}
        \frac{R(s_T)+\varepsilon(s_T)}{Z_{\tilde{R}}}\approx\frac{R(s_T)+\varepsilon(s_T)}{Z_R}
    \end{equation}
    This gives:
    \begin{equation}\label{eq:kl_div_approx}
        D_{KL}(P_{\tilde{R}}(s_T)\|P_R(s_T))\approx\sum_{s_T}\frac{R(s_T)+\varepsilon(s_T)}{Z_R}\left( \frac{\varepsilon(s_T)}{R(s_T)}-\frac{\varepsilon(s_T)^2}{2R(s_T)^2}\right)-\frac{\sum_{s'_T}\varepsilon(s'_T)}{Z_R}+\frac{1}{2}\left( \frac{\sum_{s'_T}\varepsilon(s'_T)}{Z_R}\right)^2
    \end{equation}
    Expanding the first term:
    \begin{equation}
    \begin{split}
        &\sum_{s_T}\frac{R(s_T)+\varepsilon(s_T)}{Z_R}\left( \frac{\varepsilon(s_T)}{R(s_T)}-\frac{\varepsilon(s_T)^2}{2R(s_T)^2}\right)\\
        =&\frac{1}{Z_R}\sum_{s_T}\varepsilon(s_T)-\frac{1}{2Z_R}\sum_{s_T}\frac{\varepsilon(s_T)^2}{R(s_T)}+\frac{1}{Z_R}\sum_{s_T}\frac{\varepsilon(s_T)^2}{R(s_T)}-\frac{1}{2Z_R}\sum_{s_T}\frac{\varepsilon(s_T)^3}{R(s_T)^2}
    \end{split}
    \end{equation}
    The first term cancels with the second therm $-\frac{\sum_{s'_T}\varepsilon(s'_T)}{Z_R}$ in our KL-divergence approximation. For small noise, the fourth term (involving $\varepsilon(s_T)^3$) can be neglected. This leaves:
    \begin{equation}
        D_{KL}(P_{\tilde{R}}(s_T)\|P_R(s_T))\approx \frac{1}{2Z_R}\sum_{s_T}\frac{\varepsilon(s_T)^2}{R(s_T)}+\frac{1}{2Z_R^2}\left( \sum_{s'_T}\varepsilon(s'_T)\right)^2
    \end{equation}
    Taking expectation:
    \begin{equation}
        \mathbb{E}[D_{KL}(P_{\tilde{R}}(s_T)\|P_R(s_T))]\approx\frac{1}{2Z_R}\sum_{s_T}\frac{\mathbb{E}[\varepsilon(s_T)^2]}{R(s_T)}+\frac{1}{2Z_R^2}\mathbb{E}\left[\left( \sum_{s'_T}\varepsilon(s'_T)\right)^2\right]
    \end{equation}
    Given assumptions $\mathbb{E}[\varepsilon(s_T)]=\sigma^2$ (uniform variance) and $\mathbb{E}[\varepsilon(s_T)\varepsilon(s'_T)]=0$ for $s_T\neq s'_T$ (independence), we get:
    \begin{equation}
        \mathbb{E}[D_{KL}(P_{\tilde{R}}(s_T)\|P_R(s_T))]\approx\frac{\sigma^2}{2Z_R}\sum_{s_T}\frac{1}{R(s_T)}+\frac{\sigma^2|S_T|}{2Z_R^2}
    \end{equation}
    Since $R(s_T)\geq R_{min}$ and $\frac{1}{Z_R}\leq\frac{1}{|S_T|R_{min}}$, for the first term we have:
    \begin{equation}
        \frac{\sigma^2}{2Z_R}\sum_{s_T}\frac{1}{R(s_T)}\leq\frac{\sigma^2|S_T|}{2Z_R R_{min}}\leq\frac{\sigma^2}{2R_{min}^2}
    \end{equation}
    This leads to:
    \begin{equation}
        \mathbb{E}[D_{KL}(P_{\tilde{R}}(s_T)\|P_R(s_T))]\leq\frac{\sigma^2}{2}\left( \frac{1}{R_{min}^2}+\frac{|S_T|}{Z_R^2}\right)
    \end{equation}
    This completes the proof.
\end{proof}

\begin{proof}[Proof of Theorem \ref{thm:sample_comp_noisy}]
    From Theorem \ref{thm:traj_len_tradeoff}, we have the sample complexity for achieving $\epsilon$-accuracy in flow values with probability at least $1-\delta$ under true rewards $R$ being:
    \begin{equation*}
        N=\mathcal{O}\left( \frac{|S|L\log(|S|/\delta)}{\epsilon^2}\right)
    \end{equation*}
    From Theorem \ref{thm:obj_err_noise}, we can collect:
    \begin{equation*}
        \mathbb{E}[\mathcal{L}_{TB}(\mathcal{F}_\theta,\tilde{R})-\mathcal{L}_{TB}(\mathcal{F}_\theta,R)]\leq \frac{Z^2}{R_{min}^4}\cdot\mathbb{E}\left[\varepsilon(s_T)^2\right]=\frac{Z^2\sigma^2}{R^4_{min}}
    \end{equation*}
    When training with noisy rewards, in principle, we need to achieve a tighter effective accuracy $\epsilon'$ to ensure that the final model has true accuracy $\epsilon$. The relationship between $\epsilon'$ and $\epsilon$ can be derived by considering that the total error is the sum of the optimization error and the noise-induced error:
    \begin{equation}
        \epsilon^2=\epsilon'^2+\mathbb{E}[\mathcal{L}_{TB}(\mathcal{F}_\theta,\tilde{R})-\mathcal{L}_{TB}(\mathcal{F}_\theta,R)]
    \end{equation}
    Using the bound from Theorem \ref{thm:obj_err_noise}:
    \begin{equation}
        \epsilon^2\geq \epsilon'^2+\frac{Z^2\sigma^2}{R^4_{min}}\Longrightarrow \epsilon'\leq\sqrt{\epsilon^2-\frac{Z^2\sigma^2}{R^4_{min}}}
    \end{equation}
    For this bound to be meaningful, we need $\epsilon^2>\frac{Z^2\sigma^2}{R^4_{min}}$, or equivalently, $\sigma^2<\frac{\epsilon^2R^4_{min}}{Z^2}$. This is a reasonable assumption when the noise magnitude is small relative to the target accuracy and minimum reward. Using this condition, we have:
    \begin{equation}
        \epsilon'=\epsilon\sqrt{1-\frac{Z^2\sigma^2}{\epsilon^2R^4_{min}}}
    \end{equation}
    For small noise relative to the target accuracy (specifically, when $\frac{Z^2\sigma^2}{\epsilon^2R^4_{min}}\ll 1$), we can use the approximation $\sqrt{1-x}\approx 1-x/2$ for small $x$:
    \begin{equation}
        \epsilon'\approx\epsilon\left( 1-\frac{Z^2\sigma^2}{2\epsilon^2R^4_{min}}\right)
    \end{equation}
    Since the sample complexity scales with $1/\epsilon'^2$, using this approximation of $\epsilon'$, the ratio of sample complexities is:
    \begin{equation}
        \frac{N(\epsilon,\delta,\tilde{R})}{N(\epsilon,\delta,R)}=\frac{\epsilon^2}{\epsilon'^2}\approx \frac{1}{\left( 1-\frac{Z^2\sigma^2}{2\epsilon^2R^4_{min}}\right)^2}
    \end{equation}
    Using approximation $(1-x)^{-2}\approx 1+2x+\mathcal{O}(x^2)$ for small $x$. We derive:
    \begin{equation*}
        \frac{1}{\left( 1-\frac{Z^2\sigma^2}{2\epsilon^2R^4_{min}}\right)^2}\approx 1+\frac{Z^2\sigma^2}{\epsilon^2R^4_{min}}+\mathcal{O}(\cdot)
    \end{equation*}
    where $\mathcal{O}(\cdot)$ is a higher-order infinitesimal. Thus, with a constant $C>0$, we conclude:
    \begin{equation*}
        \frac{N(\epsilon,\delta,\tilde{R})}{N(\epsilon,\delta,R)}\leq 1+\frac{CZ^2\sigma^2}{\epsilon^2R^4_{min}}
    \end{equation*}
    This completes the proof.
\end{proof}

\end{document}